\title{\LARGE \bf
	From Bipedal Walking to Quadrupedal Locomotion:
	\\ Full-Body Dynamics Decomposition for Rapid Gait Generation 
}
\author{Wen-Loong Ma$^{1}$ 
        and Aaron D. Ames$^{2}$
\thanks{*This work is supported by NSF grants 1724464, 1544332 and 1724457.}
\thanks{$^{1}$W. Ma is with the department of mechanical engineering, $^{2}$A. D. Ames is with the faculty of the department of Control and Dynamical Systems, California Institute of Technology, Pasadena, CA, 91125. 
    {\tt\small wma, ames@caltech.edu}}. 
}
\begin{document}
\maketitle
\thispagestyle{empty}
\pagestyle{empty}


\begin{abstract}
This paper systematically decomposes a quadrupedal robot into bipeds to rapidly generate walking gaits and then recomposes these gaits to obtain quadrupedal locomotion. We begin by decomposing the full-order, nonlinear and hybrid dynamics of a three-dimensional quadrupedal robot, including its continuous and discrete dynamics, into two bipedal systems that are subject to external forces.  Using the hybrid zero dynamics (HZD) framework, gaits for these bipedal robots can be rapidly generated (on the order of seconds) along with corresponding controllers. The decomposition is achieved in such a way that the bipedal walking gaits and controllers can be composed to yield dynamic walking gaits for the original quadrupedal robot --- the result is the rapid generation of dynamic quadruped gaits utilizing the full-order dynamics. This methodology is demonstrated through the rapid generation (3.96 seconds on average) of four stepping-in-place gaits and one diagonally symmetric ambling gait at 0.35 m/s on a quadrupedal robot --- the Vision 60, with 36 state variables and 12 control inputs --- both in simulation and through outdoor experiments. This suggested a new approach for fast quadrupedal trajectory planning using full-body dynamics, without the need for empirical model simplification, wherein methods from dynamic bipedal walking can be directly applied to quadrupeds. 
\end{abstract}



\section{INTRODUCTION}
\label{sec:intro}

The control of quadrupedal robots has seen great experimental success in achieving locomotion that is robust and agile, dating back to the seminal work of Raibert \cite{raibertlegged}. These results have been achieved despite the fact that quadrupedal robots have more legs, degrees of freedom, and more complicated contact scenarios when compared to their bipedal counterparts. Bipedal robots (while seeing recent successes) still have yet to experimentally demonstrate the dynamic walking behaviors in real-world settings that quadrupeds are now displaying on multiple platforms.   Yet, due to the lower degrees of freedom and, importantly, simpler contact interactions with the world, gait generation for bipedal robots based upon the full-order dynamics has a level of rigor not yet present in the quadrupedal locomotion literature (which primarily leverages heuristic and reduced-order models). It is this gap between bipedal and quadrupedal robots that this paper attempts to address: can the formal full-order gait generation methods for bipeds be translated to quadrupeds while preserving the positive aspects quadrupedal locomotion? 


To achieve quadrupedal walking, controller design has widely adopted model-reduction techniques. For example, the massless leg assumption \cite{Bledt2018MIT, Bellicoso2017Dynamic}, linear inverted pendulum model \cite{LIPM_01, Avik17Modular} and assuming the 3D quadrupedal motion can be reduced to a planar motion \cite{Boussema2019Online, Avik15Parallel} are often utilized to mitigate the computational complexity of the quadrupedal dynamics so that online control techniques such as QP, MPC, LQR can be applied \cite{Boussema19Online}. While these methods are effective in practice, it often requires some add-on layers of parameter tuning due to the gap between model and reality. This tuning is particularly prevalent for bigger and heavier robots, whose ``ignored'' physical properties may play a more significant role. 

\begin{figure}[t]
	\centering
	\includegraphics[width=0.42\textwidth]{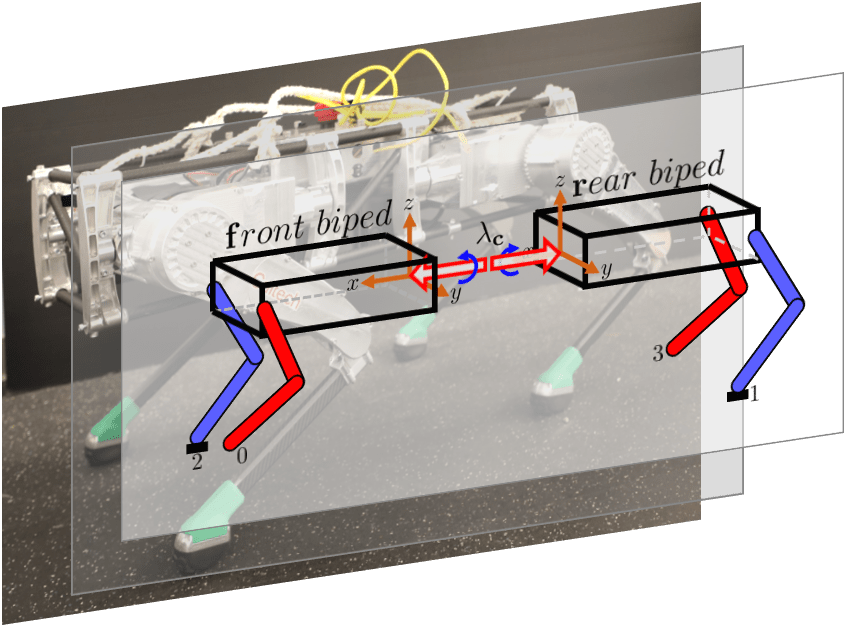}
	\vspace{-1.8mm}
	\caption{{\small 
	A conceptual illustration of the full body dynamics decomposition, where the 3D quadruped --- the Vision 60 --- is decomposed into two constrained 3D bipedal robots.}}
	\label{fig:2quad}
	\vspace{-5mm}
\end{figure}

In the context of bipedal robots, due to their inherently unstable nature, detailed model and rigorous controller design have been long been developed. A specific methodology that leverages the full-order dynamics of the robot to make formal guarantees is Hybrid Zero Dynamics (HZD) \cite{Westervelt2007a, Ames_RES_CLF_IEEE_TAC, Hamed_Buss_Grizzle_BMI_IJRR} which has seen success experimentally for both walking and running \cite{Sreenath_Grizzle_HZD_Walking_IJRR, ma2017bipedal, Reher2019DynamicWW}. 
A key to this success has been the recent developments in rapid HZD gait generation using collocation methods \cite{Hereid2018Rapid}, with the ability to generate gaits for high-dimensional robots in some cases in seconds \cite{hereid2016online}. 
Recently, the HZD framework was translated to quadrupedal robots both for gait generation and controller design \cite{ma2019First, Hamed2019Dynamically}. Although the end result was the ability generate walking, ambling and trotting for the full-order model, the high dimensional and complex contacts of the system made the gait generation complex with the fast gait being generated in 
$43$ seconds and hours of post-processing needed to guarantee stability.  
The goal of this paper is, therefore, to translate the positive aspects of HZD gait generation to quadrupeds while mitigating the aforementioned drawbacks.


%

Pioneers in robotics have discerned the correlation between bipedal and quadrupedal locomotion. For example, \cite{raibertlegged,Murphy1985} applied several bipedal gaits on quadrupedal robots; \cite{Avik15Parallel, Avik18Vertical} provided stability analysis for a planar abstract hopping robot. The ZMP condition of two bipeds was used to synthesis stability criteria for a quadruped in \cite{Laurenzi2018Quadrupedal}. However, these results rely on model reduction methods such as the 2D modeling and massless leg assumptions. Additionally, the focus was on composing bipedal controllers to stabilize quadrupedal locomotion rather than decomposing the dynamics of quadrupeds to bipedal systems while considering the evolution of the internal connection wrench. Notably, they lack a systematic approach of producing trajectories for the control of bipeds as a decomposed system from the quadrupedal robots.

The main contribution of this paper is the exact decomposition of quadrupeds into bipeds, wherein gaits can be rapidly generated and composed to be realized on the quadruped from which they were derived. Specifically, the main results of this paper are twofold: 
1) A systematic decomposition of the three-dimensional full body dynamics of a quadruped, which involved both the continuous and discrete dynamics, into two bipedal hybrid systems subject to external forces; 
2) An optimization algorithm that generates gaits for the bipedal system rapidly utilize the framework of HZD, wherein they can then be composed to yield gaits on the quadruped. 
The end result is that we are able to generate various bipedal gaits that can be recomposed to quadrupedal behaviors within seconds, and these behaviors are implemented successfully in simulation and experimentally in outdoor environments.

%

\begin{figure}[!t]
    \centering
	\vspace{3mm}
		\includegraphics[width=0.45\textwidth]{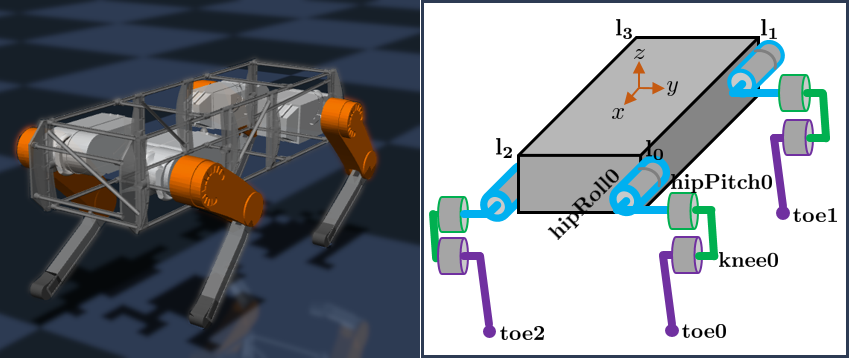}
		\vspace{-1.8mm}
		\caption{{\small 
		On the left is the the robot in MuJoCo, and on the right is the illustration of the configuration coordinates for the robot. The leg indices $\mathrm{l}_*$ are shown on the vertices of the \textit{body} link. Each leg has three actuated joints and equipped with a point contact toe.}}
		\label{fig:config}
	\vspace{-5mm}
\end{figure}

This paper is organized as follows: Section II introduces the general idea of decomposing the hybrid full-body dynamics of a quadrupedal robot into lower-dimensional half body dynamics of two identical bipeds. Based on this, we produced trajectories for stepping-in-place and ambling on a quadrupedal robot Vision60 in Section III. An analysis of its computation performance shows the efficiency compared against the full-body dynamics optimization for gait generation. In Section IV, we validate the resultant trajectories in MuJoCo \cite{Todorov2014Convex} (a commercial simulation environment), and five outdoor experiments to demonstrate the feasibility of these trajectories that are built based on decomposed bipedal dynamics. Section V concludes the paper and proposes several future directions. 

\section{Dynamics decomposition} \label{sec:DD}

In this section, we decompose the full body dynamics and control of quadrupedal robots into two identical bipedal systems. The nonlinear model of quadrupedal locomotion is a hybrid dynamical system, which is an alternating sequence of continuous- and discrete-time dynamics. The order of the sequence is dictated by contact events. 

\subsection{Quadrupedal Dynamics}
The full-body dynamics of quadrupedal robots have been detailed in \cite{ma2019First} and will be briefly revisited here to setup the problem properly. Note that in this section, we only focus on the most popular quadrupedal robotic behavior --- the \textit{diagonally supporting amble} (see \figref{fig:directed}). 

\subsubsection{State space and inputs}
The robot begin considered --- the Vision 60 V3.2 in \figref{fig:config} --- is composed of $13$ links: a \textit{body} link and $4$ limb links, each of which has three sublinks ---the \textit{hip}, \textit{upper} and \textit{lower} links. Utilizing the floating base convention \cite{Grizzle2014Models}, the configuration space is chosen as $q = ( q_b^\top, \theta_0^\top, \theta_1^\top, \theta_2^\top, \theta_3^\top)^\top \in\mathcal{Q}\subset\R^{18}$, where $q_b\in\R^3\times\mathrm{SO}(3)$ represents the Cartesian position and orientation of the \textit{body} linkage, and $\theta_i\in\R^3$ represents the three joints: \textit{hip roll, hip pitch and knee} on the leg $i\in\{0,1,2,3\}$. All of these leg joints are actuated, with torque inputs $u_i\in\R^3$. This yields the system's total DOF $n=18$ and control inputs $u =(u_0^\top,u_1^\top,u_2^\top,u_3^\top)^\top\in\R^{m},\ m=12$.
Further, we can define the state space $\mathcal{X}=T\mathcal{Q}\subseteq \R^{2n}$ with the state vector $x = (q^\top,\dot q^\top)^\top$, where $T\mathcal{Q}$ is the tangent bundle of the configuration space $\mathcal{Q}$.

\begin{figure}[!t]
	\centering
	\vspace{2mm}
		\includegraphics[width=0.303\textwidth]{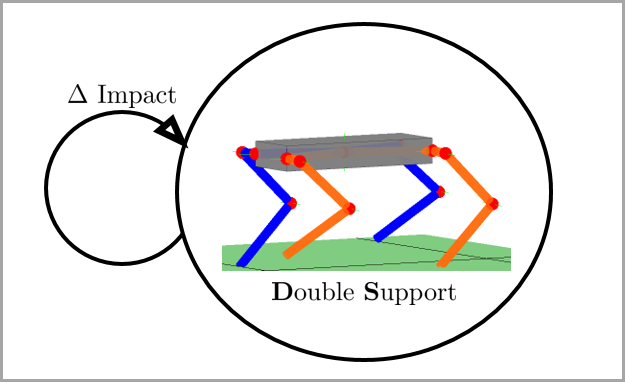}
		\vspace{-1.7mm}
		\caption{{\small The cyclic directed graph for the single-domain hybrid dynamics of the diagonally supporting ambling behavior.}}
		\label{fig:directed}
	\vspace{-5mm}
\end{figure}

\subsubsection{Continuous dynamics}
The continuous-time dynamics in \figref{fig:directed}, when toe1 and toe2 are on the ground, are modelled as constrained dynamics:
\begin{align}\begin{cases}
    D(q)\, \ddot q + H(q,\dot q) = Bu + J_1^\top(q)  \lambda_1 + J_2^\top(q)\, \lambda_2\\
    J_1(q)\, \ddot q  + \dot J_1(q,\dot q)\, \dot q = 0 \\
    J_2(q)\, \ddot q  + \dot J_2(q,\dot q)\, \dot q = 0 \\
    \end{cases}
    \label{eq:eom}
\end{align}
with the domain 
$
\mathcal{D} \defeq \{ x\in\mathcal{X} : \dot h_1(q,\dot q) = \dot h_2(q,\dot q) = 0, h_{z_1}(q)= h_{z_2}(q)=0\}.
$ 
In this formulation, we utilize the following notation: 
$D(q) \in\R^{n\times n}$ is the inertia-mass matrix; 
$H(q,\dot{q})\in\R^{n}$ contains Coriolis forces and gravity terms; 
$h_1(q), h_2(q)\in\R^3$ are the Cartesian positions of toe1 and toe2, their Jacobians are $J_*=\partial h_*/\partial q$; 
$h_{z_1}(q), h_{z_2}(q)$ are these toes' height; 
$\lambda_1, \lambda_2$ are the ground reaction force on toe1 and toe2; 
$B\in\R^{n\times m}$ is the actuation matrix. 
Essentially, we use a set of differential algebra equations (DAEs) to describe the dynamics of the quadrupedal robot that is subject to two holonomic constraints on toe1 and toe2.

\subsubsection{The discrete dynamics}
On the boundary of domain $\mathcal{D}$ we impose discrete-time dynamics to encode the \blue{perfectly inelastic} impact dynamics as toe0 and toe3 impact the ground (and suppressing the dependence of $D$ and $J_{*}$ on $q$ and $\dot{q}$):
\par\vspace{-3mm}{\small
\begin{align}
\begin{cases}
    D(\dot q^+ - \dot q^-) = J_0^\top\Lambda_0 + J_3^\top\Lambda_3 \\
    J_0 \dot q^+=0 \\ 
    J_3 \dot q^+=0 
\end{cases}\label{eq:impact}
\end{align}}%
by using conservation of momentum while satisfying the next domain's holonomic constraints, which is that toe0 and toe3 stay on the ground after the impact event. We denoted $\dot q^-$ and $\dot q^+$ as the pre- and pose-impact velocity terms, $\Lambda_0, \Lambda_3\in\R^3$ are the impulses exerted on toe0 and toe3.

\subsection{Continuous dynamics decomposition}
We now decompose the quadrupedal full body dynamics into two bipedal robots. First, as shown in \figref{fig:2quad}, the open-loop dynamics can be equivalently written as: 
\par\vspace{-3mm}{\small
\begin{numcases}{\text{OL-Dyn}\defeq}
    D_\rmf \ddot q_\rmf + H_\rmf = J_{\rmf_2}^\top \lambda_2 + B_\rmf u_\rmf -J_c^\top\lambda_c  \label{eq:ol21}\\
    J_{\rmf_2} \ddot q_\rmf + \dot J_{\rmf_2} \dot q_\rmf = 0                              \label{eq:ol22}\\
    D_\rmr \ddot q_\rmr + H_\rmr = J_{\rmr_1}^\top \lambda_1 + B_\rmr u_\rmr +J_c^\top\lambda_c  \label{eq:ol23}\\
    J_{\rmr_1} \ddot q_\rmr + \dot J_{\rmr_1} \dot q_\rmr = 0                              \label{eq:ol24}\\
    \ddot q_{\rmbr} - \ddot q_{\rmbf} = 0                                                  \label{eq:ol25}
\end{numcases}}%
wherein we utilized the following notation:
$q_{\rmbr}, q_{\rmbf}\in\R^3\times SO(3)$ are the coordinates for the body linkages of the front and rear bipeds (see \figref{fig:2quad}); 
$q_\rmf = (q_{\rmbf}^\top, \theta_0^\top, \theta_2^\top)^\top$ and $q_\rmr = (q_{\rmbr}^\top, \theta_1^\top, \theta_3^\top)^\top$ are the configuration coordinates for the front and rear bipeds; 
$D_\rmf(q_\rmf), D_\rmr(q_\rmr) \in\R^{12\times 12}$ are the inertia-mass matrices of the front and rear bipedal robots; 
The Jacobians $J_{\rmf_2} = \partial h_{\rmf_2} / \partial q_\rmf, J_{\rmr_1} = \partial h_{\rmr_1} / \partial q_\rmr$ with the Cartesian positions of toe2 --- $h_{\rmf_2}(q_\rmf)$ and toe1 --- $h_{\rmr_1}(q_\rmr)$; 
The Jacobian matrix for the connection constraint \eqref{eq:ol25} is $J_c = \partial (q_{\rmbr} - q_{\rmbf} ) / \partial q_{\rmf}$;
$u_\rmf=(u_0^\top, u_2^\top)^\top$ and $u_\rmr=(u_1^\top, u_3^\top)^\top$. 
Note that the Cartesian position of toe2 only depends on $q_\rmf$, which is due to the floating base coordinate convention.

\begin{proposition}
The dynamical system (OL-Dyn) is equivalent to the system \eqref{eq:eom}.
\end{proposition}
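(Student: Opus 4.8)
The plan is to prove that the two systems have the same set of solution trajectories, by writing down the correspondence explicitly and checking the equations block by block; the whole argument rests on an additive, tree-topology decomposition of the quadrupedal dynamics together with the cancellation of the internal connection wrench $\lambda_c$ when the two half-bodies are reassembled.

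\emph{Structural core.} Order the quadruped coordinates as $q=(q_b^\top,\theta_0^\top,\theta_1^\top,\theta_2^\top,\theta_3^\top)^\top$. Since the robot is a kinematic tree rooted at the \emph{body} link, each rigid link belongs to the body or to exactly one leg, and the kinetic energy is the sum of the per-link kinetic energies; hence $D(q)$ has vanishing leg-$i$/leg-$j$ blocks for $i\neq j$, and its body block splits as $D_{bb}=D_{bb}^{\rmf}+D_{bb}^{\rmr}$, with $D_{bb}^{\rmf}$ collecting the (chosen) body-link share plus the contributions of legs $0,2$ and $D_{bb}^{\rmr}$ those of legs $1,3$. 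Padding $D_\rmf,D_\rmr$ with zero rows and columns in the coordinates they omit therefore gives $D=\widehat D_\rmf+\widehat D_\rmr$ on $\R^{18\times18}$; the same bookkeeping gives $B=\widehat B_\rmf+\widehat B_\rmr$ and $H=\widehat H_\rmf+\widehat H_\rmr$, the Coriolis/gravity vector inheriting the splitting because it is assembled from the Christoffel symbols of $D$ and from the additive potential energy (and $\dot q_{\rmbf}=\dot q_{\rmbr}$ on the constraint set). Finally, the position of toe2 depends only on $(q_b,\theta_2)$, so $J_2$ has zero columns elsewhere and $J_{\rmf_2}$ is literally $J_2$ with those columns removed, i.e.\ $\widehat J_{\rmf_2}=J_2$; symmetrically $\widehat J_{\rmr_1}=J_1$; and $J_c=\partial(q_{\rmbr}-q_{\rmbf})/\partial q_\rmf$ is supported only on the body columns, so $J_c^\top$ feeds into the body rows alone.

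\emph{From (OL-Dyn) to \eqref{eq:eom} and back.} Take a solution of (OL-Dyn) with $q_{\rmbf}(0)=q_{\rmbr}(0)$, $\dot q_{\rmbf}(0)=\dot q_{\rmbr}(0)$. Then \eqref{eq:ol25} integrates to $q_{\rmbf}(t)\equiv q_{\rmbr}(t)=:q_b(t)$, so one assembles $q=(q_b^\top,\theta_0^\top,\theta_1^\top,\theta_2^\top,\theta_3^\top)^\top$ and $u=(u_0^\top,u_1^\top,u_2^\top,u_3^\top)^\top$. Padding \eqref{eq:ol21} and \eqref{eq:ol23} to $\R^{18}$ and adding, the structural core collapses the left-hand side to $D\ddot q+H$ and the right-hand side to $Bu+J_1^\top\lambda_1+J_2^\top\lambda_2$, while the two connection terms cancel; this is the first line of \eqref{eq:eom}, and the padded forms of \eqref{eq:ol22} and \eqref{eq:ol24} are its remaining two lines. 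Conversely, given a solution of \eqref{eq:eom}, set $q_{\rmbf}\equiv q_{\rmbr}\equiv q_b$ so that \eqref{eq:ol25}, \eqref{eq:ol22}, \eqref{eq:ol24} hold, split $u$, and observe that the leg-row subsystems of \eqref{eq:ol21}/\eqref{eq:ol23} already hold term by term (the $J_c$, $J_1$, $J_2$ terms vanish on the relevant leg rows). Define $\lambda_c$ from the body-row residual of \eqref{eq:ol21}, which is uniquely solvable because $J_c$ restricted to the body columns is $\pm I_6$; the body rows of \eqref{eq:ol23} are then automatic, since their sum with the (already valid) body rows of \eqref{eq:ol21} must equal the body rows of \eqref{eq:eom} and the two $\lambda_c$-contributions are equal and opposite.

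\emph{Anticipated obstacle.} The delicate part is making the structural core airtight, especially the claim that $H$ splits exactly as $D$ does --- this is where the matched body velocities forced by \eqref{eq:ol25} are used --- and pinning down the sense of ``equivalent'': (OL-Dyn) carries $24$ configuration variables against the $18$ of \eqref{eq:eom}, and the extra $6$ degrees of freedom are removed precisely by the holonomic connection constraint \eqref{eq:ol25} together with consistent body initial data, so the equivalence should be read on that constraint surface. A minor bookkeeping caveat is the reuse of a single symbol $J_c$ (a function of $q_\rmf$) in both biped equations, which is harmless because it couples only the shared body coordinates.
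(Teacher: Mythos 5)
Your proposal is correct and follows essentially the same route as the paper's proof: both rest on the branch-induced block sparsity of the inertia matrix, zero-padding the two biped equations to $\R^{18}$, and summing so that the $J_c^\top\lambda_c$ terms cancel under the holonomic identification $q_{\rmbf}\equiv q_{\rmbr}$. You go somewhat further by also making the converse direction explicit (constructing $\lambda_c$ from the body-row residual) and by flagging that the additive splitting of $H$ requires the matched body velocities, points the paper leaves to the cited branch-induced-sparsity properties of rigid-body dynamics rather than spelling out.
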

\begin{proof}
We can write \eqref{eq:ol21} and \eqref{eq:ol23} as:
\par\vspace{-3mm}{\small
\begin{align*}
    \begin{bmatrix}
            D_{\rmbf}      & D_{b_0} & D_{b_2} \\
            D_{b_0}^\top   & D_{0}   & 0       \\
            D_{b_2}^\top   & 0       & D_{2}  
    \end{bmatrix}
    \begin{bmatrix}   \ddot q_{\rmbf} \\ \ddot q_0 \\ \ddot q_2    \end{bmatrix}
    + \begin{bmatrix}    H_{\rmbf} \\ H_0 \\ H_2    \end{bmatrix}
    &= B_\rmf u_\rmf + J_{\rmf_2}^\top \lambda_2 - J_c^\top \lambda_c \\
    \vspace{2mm}
    \begin{bmatrix}
        D_{\rmbr}      & D_{b_1} & D_{b_3} \\
        D_{b_1}^\top   & D_{1}   & 0       \\
        D_{b_3}^\top   & 0       & D_{3}
    \end{bmatrix}
    \begin{bmatrix}   \ddot q_{\rmbr} \\ \ddot q_1 \\ \ddot q_3    \end{bmatrix}
    + \begin{bmatrix}    H_{\rmbr} \\ H_1 \\ H_3    \end{bmatrix}
    &= B_\rmr u_\rmr + J_{\rmr_1}^\top \lambda_1 + J_c^\top \lambda_c
\end{align*}}%
where each entry has a proper dimension to make the equations consistent. Expanding them yields: 
\par\vspace{-3mm}{\small
\begin{align*}
    \begin{bmatrix}
            D_{\rmbf}   & D_{b_0} & 0 & D_{b_2} & 0  \\
            D_{b_0}^\top   & D_{0}   & 0 & 0 & 0      \\
            0           & 0       & 0 & 0 & 0      \\
            D_{b_2}^\top   & 0       & 0 & D_{2}  & 0      \\
            0           & 0       & 0 & 0 & 0 
    \end{bmatrix}
    \hspace{-1mm}
    \begin{bmatrix}   \ddot q_{\rmbf} \\ \ddot q_0 \\\ddot q_1 \\\ddot q_2 \\ \ddot q_3    \end{bmatrix}
    \hspace{-1mm}+\hspace{-1mm}
    \begin{bmatrix}    H_{\rmbf} \\ H_0 \\ 0  \\ H_2  \\ 0  \end{bmatrix}
    \hspace{-1mm} &= \hspace{-1mm}
    \begin{bmatrix}    -\lambda_c \\ u_{0}  \\  0  \\ u_2  \\ 0  \end{bmatrix} 
    \hspace{-1mm}+\hspace{-1mm}
    J_2^\top \lambda_2,
\end{align*}
\begin{align*}
    \begin{bmatrix}
            D_{\rmbr}   & 0 & D_{b_1} & 0 & D_{b_3}  \\
            0           & 0 & 0       & 0 & 0        \\
            D_{b_1}^\top   & 0 & D_{1}   & 0 & 0        \\
            0           & 0           & 0 & 0 & 0    \\
            D_{b_3}^\top   & 0           & 0 & 0 & D_{3} 
    \end{bmatrix}\hspace{-1mm}
    \begin{bmatrix} \ddot q_{\rmbr} \\ \ddot q_0 \\\ddot q_1 \\\ddot q_2 \\ \ddot q_3    \end{bmatrix}
    \hspace{-1mm}+\hspace{-1mm}
    \begin{bmatrix} H_{\rmbr} \\ 0 \\ H_1  \\ 0  \\ H_3  \end{bmatrix}
    \hspace{-1mm}&= \hspace{-1mm} 
    \begin{bmatrix} \lambda_c \\ 0 \\ u_{1} \\ 0 \\ u_3  \end{bmatrix} 
    \hspace{-1mm}+\hspace{-1mm}
    J_1^\top\lambda_1.
\end{align*}}%
Combining these two equations, and using the fact that $q_{\rmbf} - q_{\rmbr} \equiv 0$\footnote{
$X\equiv Y$ means: $X(t)=Y(t)$ for all $t$ they are defined on. 
}
(holonomic constraint) yields the dynamics given in \eqref{eq:eom}. 
It is worthwhile to note that all the terms appeared in these equations can be verified using traditional rigid body dynamics and the corresponding details of the structure and necessary properties of the inertia-mass matrices can be found from the \textit{branch induced sparsity} \cite{Featherstone2008Rigid}.
\end{proof}

Note that \eqref{eq:ol25} can be equivalently replaced by summating the first $6$ equations of \eqref{eq:ol21} and \eqref{eq:ol23}:
\par\vspace{-3mm}{\small
\begin{align*}
    ( D_{\rmbf} + D_\rmbr ) \ddot q_{b_i} + \sum_{j=0}^3 D_{bj}\ddot q_j + H_\rmbf+H_\rmbr = 
    J^\top_{\rmr_1,b}\lambda_1 + J^\top_{\rmf_2,b}\lambda_2
\end{align*}}%
\vspace{-5mm}
\begin{align}
    \text{Denoted by:}\ \ \ 
    h_c(q_\rmf,\dot q_\rmf, \ddot q_\rmf, \lambda_2, q_\rmr,\dot q_\rmr, \ddot q_\rmr, \lambda_1) = 0
    \label{eq:hc}
\end{align}%
where $i=\rmf,\rmr$ and $J_{\rmr_1,b}, J_{\rmf_2,b}$ are the corresponding submatrices: 
$
J_{\rmr_1} = \begin{bmatrix}J_{\rmr_1,b} & J_{\rmr_1,\theta} \end{bmatrix}, ~ 
J_{\rmf_2} = \begin{bmatrix}J_{\rmf_2,b} & J_{\rmf_2,\theta} \end{bmatrix}.
$

Consider a system obtained from \eqref{eq:ol21}, \eqref{eq:ol22}, and \eqref{eq:hc} which defines the dynamics of the \textit{front biped} (see \figref{fig:2quad}):
\begin{align}
    \text{(f)} \defeq \begin{cases}
        D_\rmf \ddot q_\rmf + H_\rmf = J_{\rmf_2}^\top \lambda_2 + B_\rmf u_\rmf -J_c^\top\lambda_c \\
        J_{\rmf_2} \ddot q_\rmf + \dot J_{\rmf_2} \dot q_\rmf = 0  \\
         h_c(q_\rmf,\dot q_\rmf, \ddot q_\rmf, \lambda_2, q_\rmr,\dot q_\rmr, \ddot q_\rmr, \lambda_1) = 0
    \end{cases} 
\end{align} 
which is a dynamical system with feedforward terms $(q_\rmr,\dot q_\rmr, \ddot q_\rmr, \lambda_1)$. The dynamics of the \textit{rear biped} (r), can be similarly obtained using \eqref{eq:ol23}, \eqref{eq:ol24}, and \eqref{eq:hc}.  We have thus decomposed the dynamics of a quadrupedal robot \eqref{eq:eom} to two bipedal dynamical systems (f) and (r), as shown in \figref{fig:2quad}.

\begin{example}

\begin{figure}[t]
	\centering
	\includegraphics[width=0.3\textwidth]{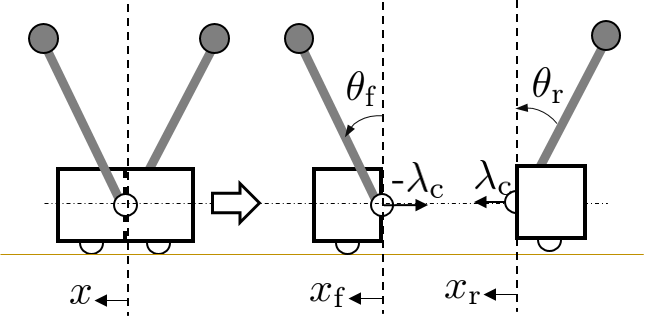}
	\vspace{-1.8mm}
	\caption{{\small The full body dynamics are composed of two invert pendulum carts, both rotational joints are actuated with inputs $u_\rmf, u_\rmr$. The mass of the cart is $2M$ and each of the pendulum weights $m$ with length $l$.}} 
	\label{fig:2IP}
	\vspace{-4mm}
\end{figure}

The idea of dynamics decomposition can be illustrated using a simple example in \figref{fig:2IP}. Note that each subsystem is not subject to any constraints. The half-body dynamics of a single cart with an inverted pendulum are:
\par\vspace{-3mm}{\small
\begin{align*}
    \begin{bmatrix}   M+m & -ml\cos\theta_i \\ -ml\cos\theta_i & ml^2    \end{bmatrix}
    \begin{bmatrix}   \ddot x_i \\ \ddot \theta_i    \end{bmatrix}
    + 
    \begin{bmatrix}   ml\dot\theta_i \sin\theta_i \\ -mgl\sin\theta_i    \end{bmatrix}
    =
    \begin{bmatrix}  \blue{\pm}\lambda_c \\ u_i \end{bmatrix}
\end{align*}}\noindent
where $i\in\{\rmf, \rmr\}$. The sign for $\lambda_c$ is negative for the front system and positive for the rear system. We can use a joint-space PD controller $u_\rmf(\theta_\rmf, \dot \theta_\rmf), u_\rmr(\theta_\rmr, \dot \theta_\rmr)$ to achieve a desired behavior such that the two invert pendulums vibrate symmetrically, i.e., $\theta_\rmf = -\theta_\rmr$. Then using \eqref{eq:hc} we have
$ ( 2M+2m ) \ddot x_i = 0 \Rightarrow \ddot x_i = 0 $,
which yields the internal connection force $\lambda_c = -ml\cos\theta_\rmf\ddot\theta_\rmf + ml\dot\theta_\rmf\sin\theta_\rmf$. In another word, when the two invert pendulums move symmetrically, both carts have zero acceleration. This physics example is rather trivial, but it suggested an insight on why a bipedal system (or a single invert pendulum) is difficult to stabilize while a quadrupedal system (or a parallel double invert pendula) is easy to remain stationary despite its higher DOF.


\end{example}

\subsection{Control decomposition}
We now design a control law to track the desired trajectories representing quadrupedal behaviors. The algorithm to produce these trajectories will be detailed in the next section. We define outputs \blue{(virtual constraints)} for the biped $i$ with $i\in\{\text{f}, \text{r}\}$ as $y_i = y^a_i(q_i) - \mathcal{B}_i(t)$, with $t$ the time and $\mathcal{B}(t)$ a $5$th order Be\'zier polynomial. For a simple case study, we chose $y^a_i(q_i)$ as the actuated joints: 
$
y^a_\rmf = (\theta_0^\top, \theta_2^\top)^\top, ~ 
y^a_\rmr = (\theta_1^\top, \theta_3^\top)^\top.
$
By imposing that the output dynamics of $y_i$ act like those of a linear system (as can be enforced through control law $u^c_i$), we have the closed-loop dynamics of the decomposed bipeds subject to control as follows:
\par\vspace{-3mm}{\small
\begin{align}
    \begin{cases}
        D_\rmf \ddot q_\rmf + H_\rmf = J_{\rmf_2}^\top \lambda_2 + B_\rmf u^c_\rmf -J_c^\top\lambda_c \\
        J_{\rmf_2} \ddot q_\rmf + \dot J_{\rmf_2} \dot q_\rmf = 0 \\
        \ddot y_\rmf = k_1\dot y_\rmf + k_2 y_\rmf \\
        h_c(q_\rmf,\dot q_\rmf, \ddot q_\rmf, \lambda_2, q_\rmr,\dot q_\rmr, \ddot q_\rmr, \lambda_1) = 0
    \end{cases} \label{eq:clsysf}
    \\
    \begin{cases}
        D_\rmr \ddot q_\rmr + H_\rmr = J_{\rmr_1}^\top \lambda_1 + B_\rmr u^c_\rmr +J_c^\top\lambda_c \\
        J_{\rmr_1} \ddot q_\rmr + \dot J_{\rmr_1} \dot q_\rmr = 0 \\
        \ddot y_\rmr = k_1\dot y_\rmr + k_2 y_\rmr \\
        h_c(q_\rmf,\dot q_\rmf, \ddot q_\rmf, \lambda_2, q_\rmr,\dot q_\rmr, \ddot q_\rmr, \lambda_1) = 0
    \end{cases} \label{eq:clsysr}
\end{align}
}%
In particular, the output dynamics implemented here is an implicit version of \textit{input-output feedback linearization}, details of this implementation can be found in \cite{hereid20163d, Ames_Human_Inspired_IEE_TAC}.

However, to design trajectories and determine the control inputs for a biped such as (f), we need to know all of the feedforward terms $(q_\rmr,\dot q_\rmr, \ddot q_\rmr, \lambda_1)$ for the time $t\in[0,T]$, with $T$ the time duration of a step. Therefore, the following equation is used to encode the desired correlation between the front and rear bipeds: 
\par\vspace{-3mm}{\small
\begin{align}
    \mathcal{B}_\rmr(t) = M \mathcal{B}_\rmf(t) + b.
    \label{eq:bi2quadmap}
\end{align}}%
Further, we consider a widely used motion of quadrupedal robots --- the \textit{diagonally symmetric} gait, where the joints of leg3 is a mirror of leg0 and those of leg1 is a mirror of leg2. In this case we have $M$ a diagonal matrix whose diagonal entries are $-1,1,1,-1,1,1$ and $b=0$. Note that one can specify other motions as well, for example, a torso-leaned motion can be achieved by offsetting $b$. 
Since the connection constraint $q_{\rmbf}\equiv q_{\rmbr}$ is always satisfied by mechanical wrenches $\lambda_c$, then on the \textit{zero dynamics (ZD) surface} \cite{Sastry1999Nonlinear}, i.e., $y_i(q_i) \blue{= y^a_i(q_i) - \mathcal{B}_i(t)} \equiv 0$, we have the following correlation between the two bipeds:
\begin{align}
    q_\rmr \equiv A q_\rmf + b,\ \ \text{where}\ A = \begin{bmatrix} I& \\ & M\end{bmatrix}.
\end{align}
Additionally, to determine $\lambda_1$ of the biped (r), we also need to impose the constraint \eqref{eq:ol24} to the system in \eqref{eq:clsysf}. 
Then subtract the dynamics of biped (f) from \eqref{eq:hc} 
to have the closed-loop dynamics of the front biped subject to the connection wrench $\lambda_c$ as: 
\par\vspace{-3mm}{\small
\begin{numcases}{\text{CL-Dyn-f}\defeq}
        D_\rmf \ddot q_\rmf + H_\rmf = J_{\rmf_2}^\top \lambda_2 + B_\rmf u^c_\rmf -J_c^\top\lambda_c  \label{eq:clf1}\\
        \ddot y_\rmf = k_1\dot y_\rmf + k_2 y_\rmf \equiv 0                              \label{eq:clf2}\\
        J_{\rmf_2} \ddot q_\rmf + \dot J_{\rmf_2} \dot q_\rmf = 0                        \label{eq:clf3}\\
        J_{\rmr_1} A\ddot q_\rmf + \dot J_{\rmr_1} A\dot q_\rmf = 0                      \label{eq:clf4}\\
        \hat D_\rmf A\ddot q_\rmf + \hat H_\rmr = J^\top_{\rmr_1,b}\lambda_1             \label{eq:clf5}
\end{numcases}}%
with $\hat D_\rmf\in\R^{6\times 12}, \hat H_\rmf\in\R^6$ the first 6 rows of $D_\rmf$ and $H_\rmf$, respectively. We now have the decomposed dynamics of system (f) that is independent from the feedforward terms. We can view this system as a dynamical system \eqref{eq:clf1} subject to \textit{virtual constraint} \eqref{eq:clf2} with inputs $u_\rmf^c$ and \textit{mechanical constraints} \eqref{eq:clf3}, \eqref{eq:clf4}, and \eqref{eq:clf5} with inputs $\lambda_1$, $\lambda_2$, and $\lambda_c$. 

\begin{figure*}[t]
\vspace{3mm}
	\begin{center}
		\includegraphics[width = 0.17\textwidth]{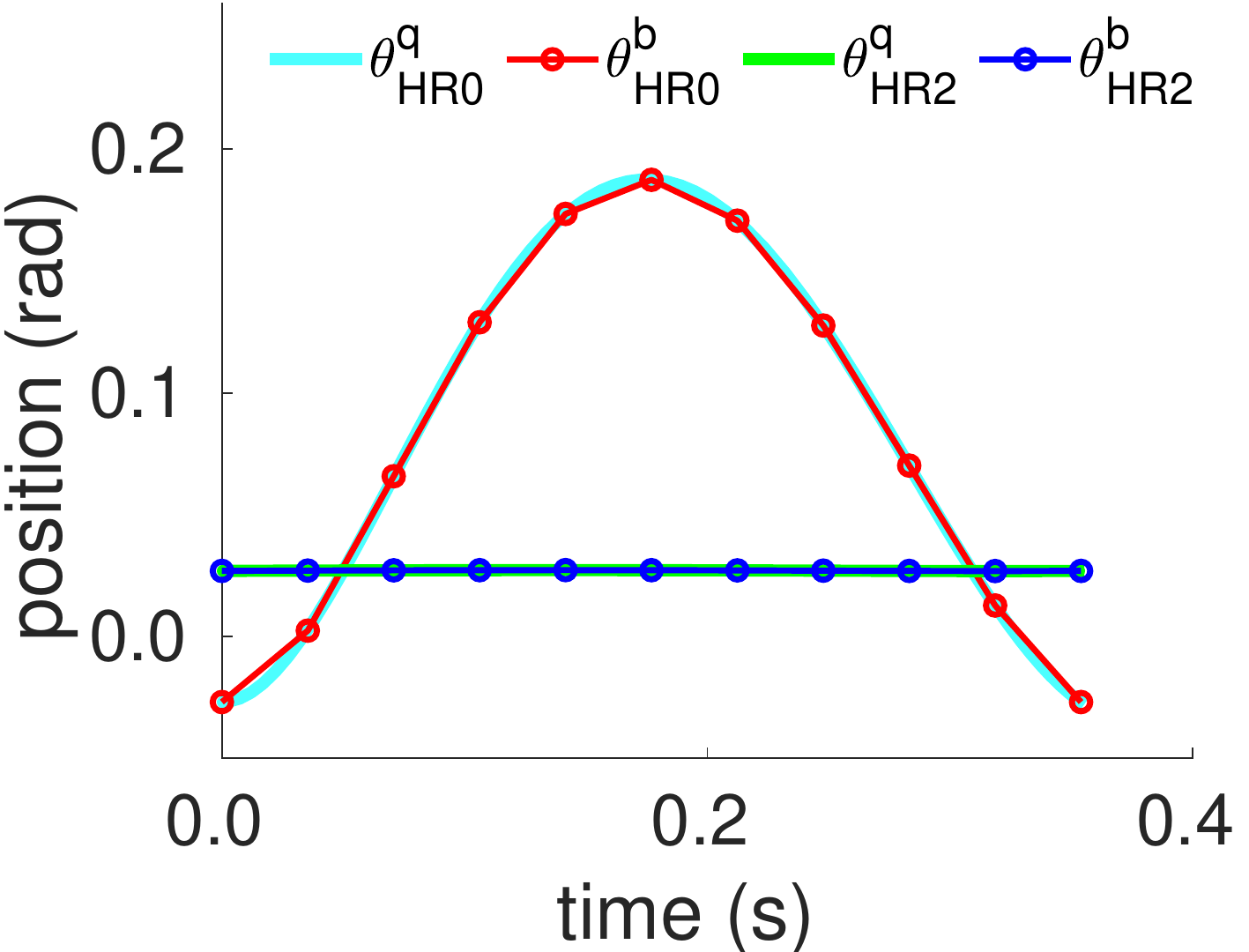}
		\includegraphics[width = 0.17\textwidth]{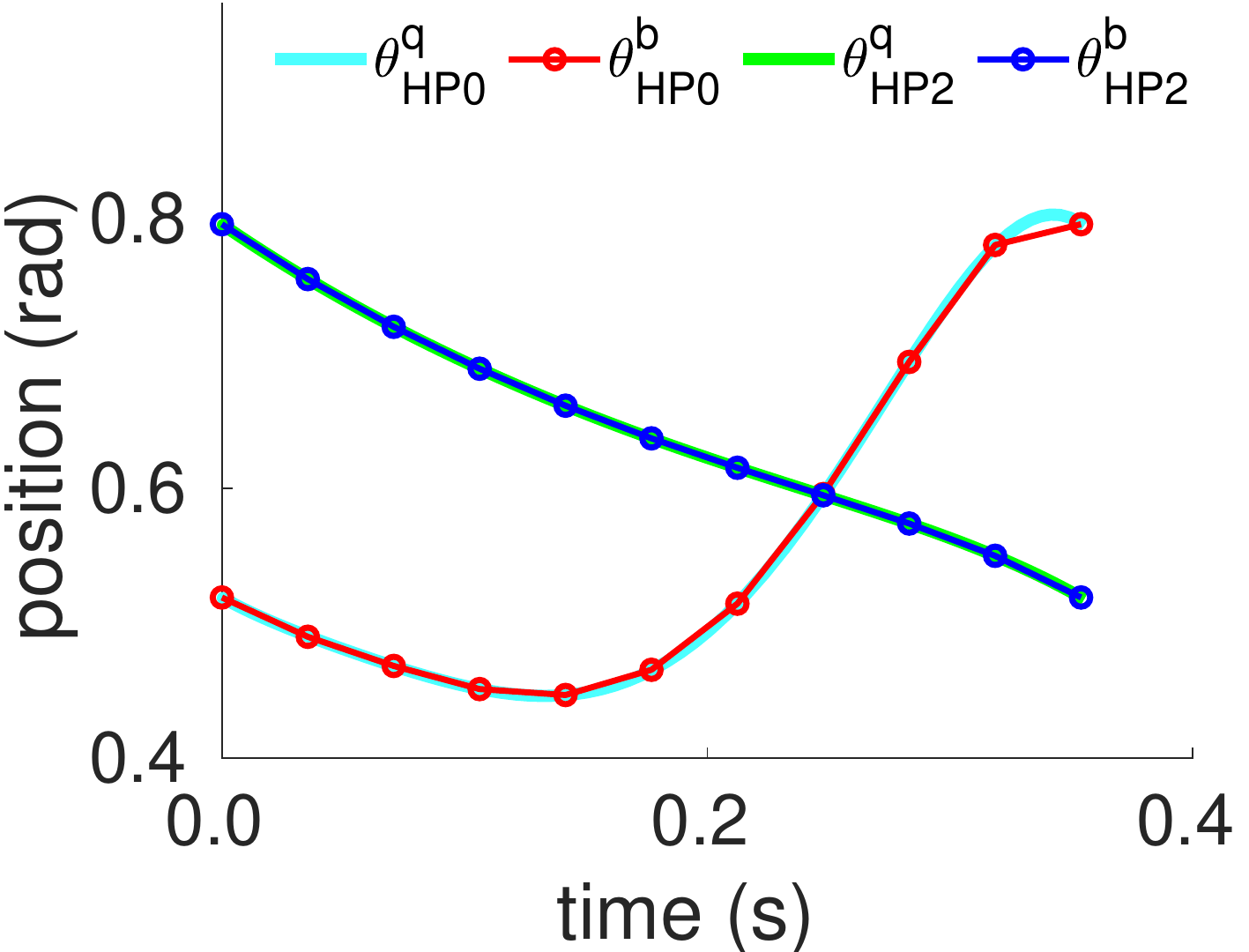}
		\includegraphics[width = 0.17\textwidth]{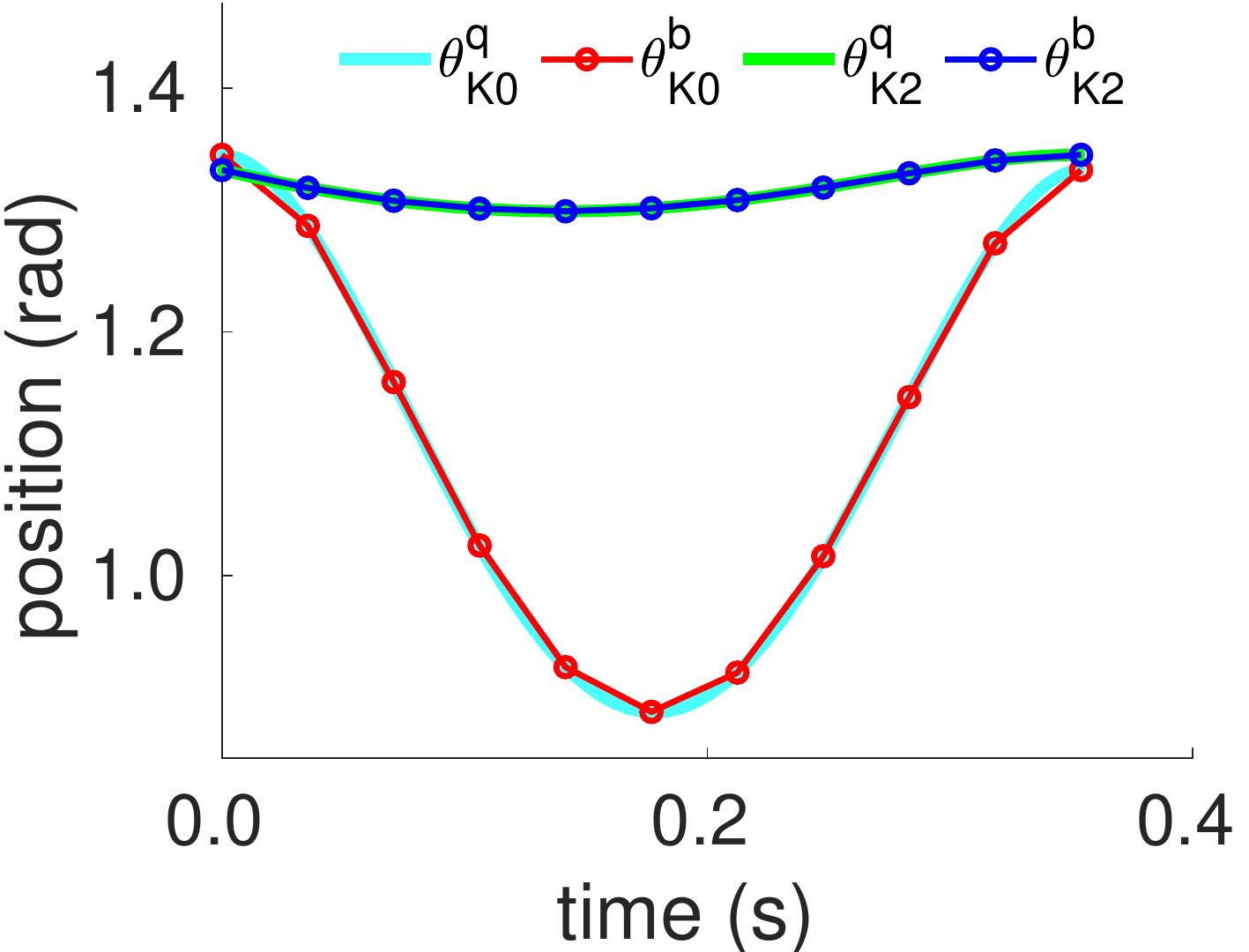}
		\includegraphics[width = 0.165\textwidth]{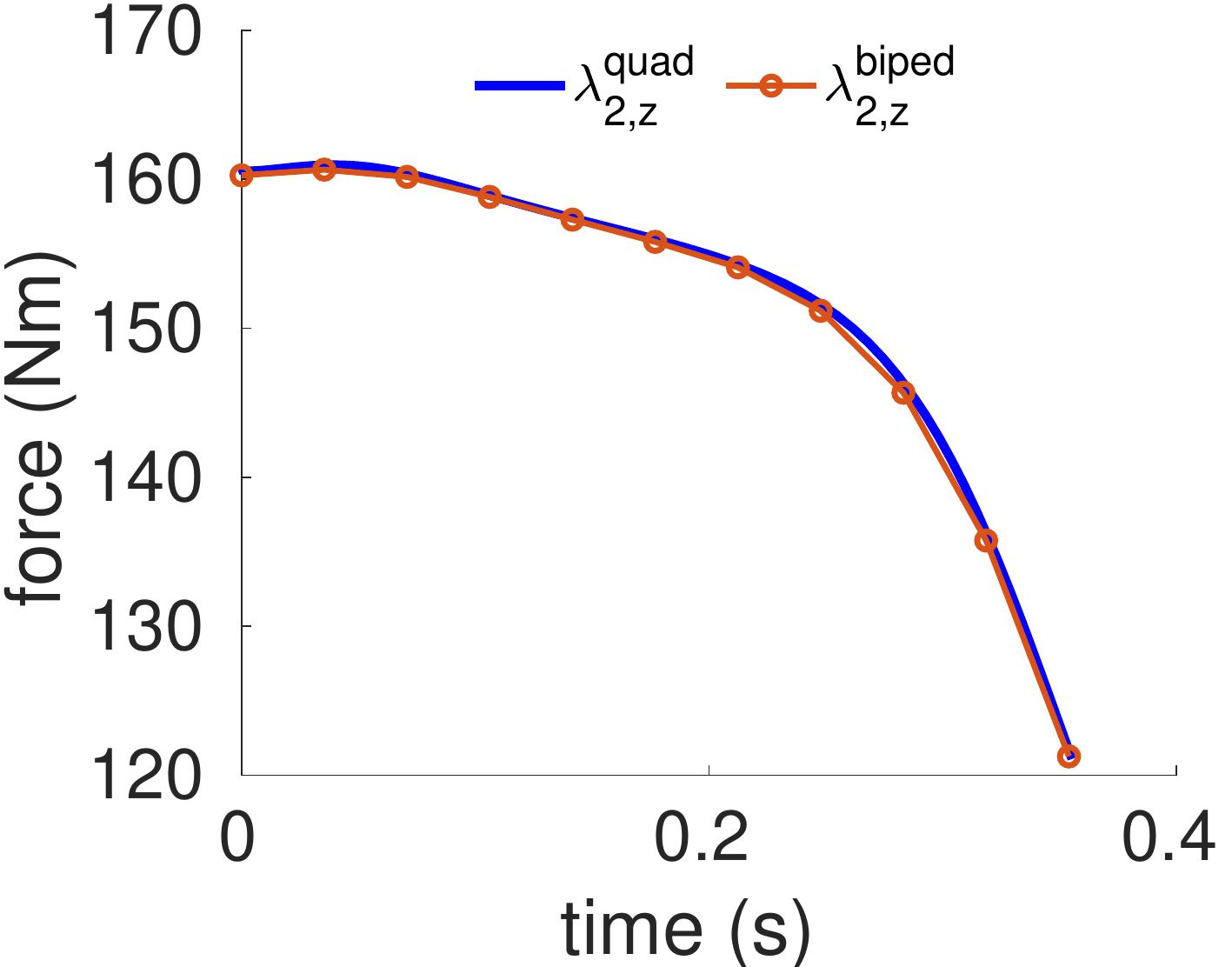}
		\includegraphics[width = 0.165\textwidth]{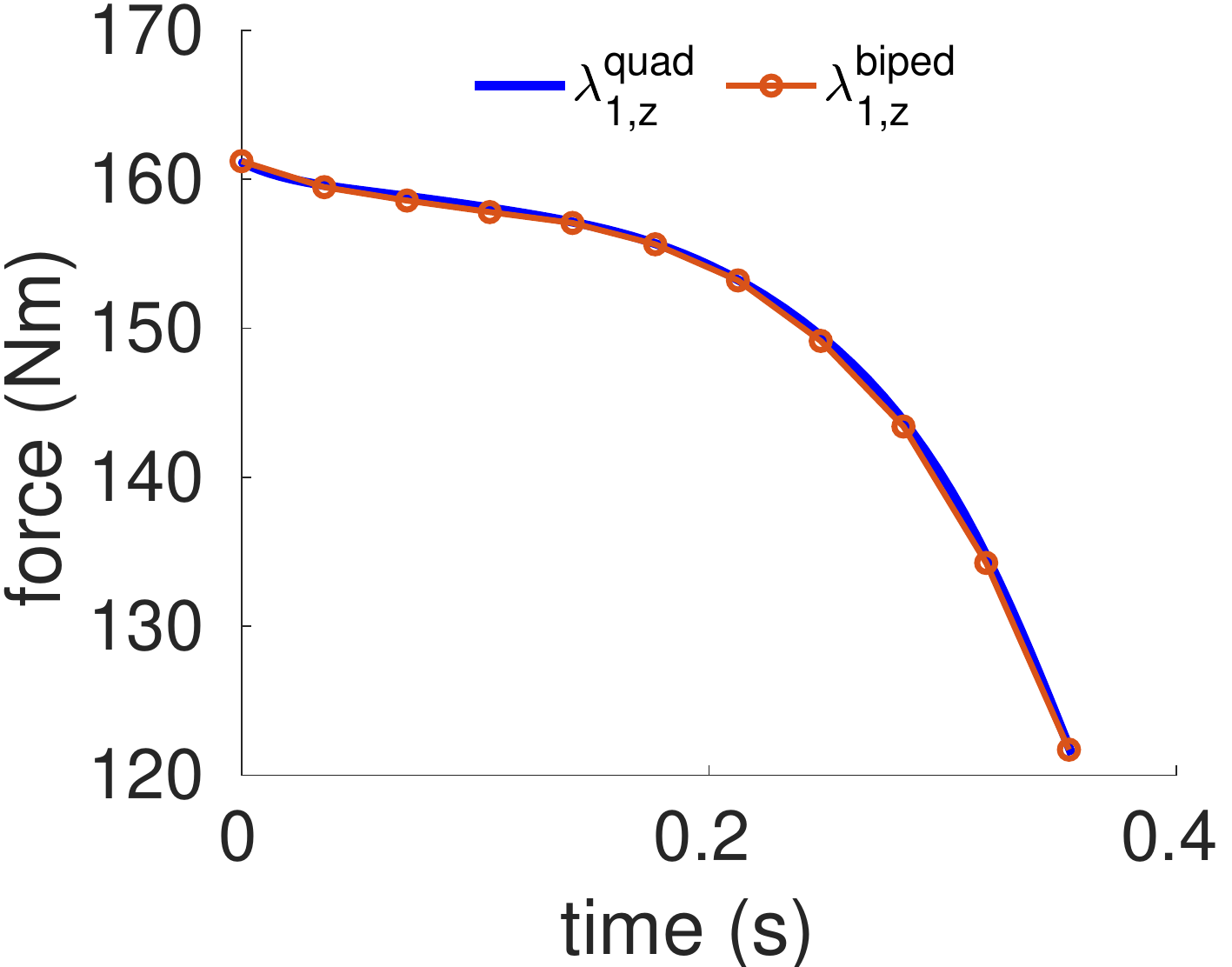}
		\vspace{-2mm}
	    \caption{{\small A comparison between the solution of bipedal walking dynamics obtained from the decomposition-based optimization and a simulated step of the full-order quadrupedal dynamics using the composed bipedal gaits; here \texttt{MATLAB ODE45} was used.}}
	    \label{fig:bi2quad}
	\end{center}
	\vspace{-7mm}
\end{figure*}

\subsection{Impact dynamics of the decomposed system}
With the continuous dynamics written as (OL-Dyn), we can similarly expand the impact dynamics \eqref{eq:impact} as:
\par\vspace{-3mm}{\small
\begin{align}
\Delta\defeq
\begin{cases}
    D_\rmf(\dot q_\rmf^+ - \dot q_\rmf^-) = J_{\rmf_2}^\top \Lambda_2 - J_c^\top \Lambda_c\\
    J_{\rmf_2} \dot{q}_\rmf^+ = 0 \\
    D_\rmr(\dot q_\rmr^+ - \dot q_\rmr^-) = J_{\rmr_1}^\top \Lambda_1 + J_c^\top \Lambda_c\\
    J_{\rmr_1} \dot{q}_\rmr^+ = 0 \\
    J_c( \dot q_\rmr^+ - \dot q_\rmf^+) = 0
\end{cases}
\label{eq:impactsss}
\end{align}}%
The proof is similar to that of continuous dynamics decomposition, thus omitted. On the ZD surface, where both of the bipedal systems (f) and (r) have zero tracking errors before and after the impact dynamics $\Delta$ (we will use an optimization algorithm to determine those gaits that are \textit{hybrid invariant}, \cite{GRABPL01,morrisTrans,Ames_Human_Inspired_IEE_TAC}), we have the correlation:
$
    q_\rmr^- = A q_\rmf^- + b, ~ 
    q_\rmr^+ = A q_\rmf^+ + b. 
$ 
Plug into \eqref{eq:impactsss} to get the impact dynamics of the decomposed system as:
\par\vspace{-3mm}{\small
\begin{align} 
    \begin{bmatrix}
        D_\rmf      & -J_{\rmf_2}^\top & 0 & J_c^\top \\
        J_{\rmf_2}  & 0  & 0 & 0\\
        D_\rmr A    & 0  & -J_{\rmr_1}^\top & -J_c^\top \\
        J_{\rmr_1}A & 0  & 0 & 0 
    \end{bmatrix}
    \begin{bmatrix}
        \dot q_\rmf^+ \\ \Lambda_1 \\ \Lambda_2 \\ \Lambda_c
    \end{bmatrix}
    &= 
    \begin{bmatrix}
        D_\rmf \dot q_\rmf^- \\ 0\\  D_\rmr A \dot q_\rmf^- \\ 0 
    \end{bmatrix}
    \label{eq:hyimpact1}
\end{align}}%
Note that although system \eqref{eq:hyimpact1} is an overdetermined system, removing the redundant equations is not desirable in practice, as it may result in an ill-posed problem. This issue can be more severe for robots with light legs. Moreover, the implicit optimization method in the latter section can solve this system accurately and efficiently.


\section{Decomposition-based optimization} \label{sec:opt}

Past work has investigated the formal analysis and controller design for the full-body dynamics of quadrupeds \cite{Hamed2019Dynamically, ma2019First}. Although we were able to produce trajectories that are stable solutions to the closed-loop multi-domain dynamics for walking, ambling, and trotting, the computational complexity makes realizing these methods difficult in practice: it typically takes minutes to generate a trajectory and hours to post-process the parameters to guarantee dynamic stability.  However, by using the dynamics decomposition method, we can produce bipedal walking gaits that can be composed to obtain quadrupedal locomotion while maintaining the efficiency of computing the lower-dimensional dynamics of bipedal robots. In this section, we detail this process using nonlinear programming (NLP).

Given the constrained bipedal dynamics (CL-Dyn-f) and the impact dynamics \eqref{eq:hyimpact1}, the target is to find a solution to the closed-loop dynamical system efficiently. The nonlinear program is formulated as: 
\par\vspace{-3mm}{\small 
\begin{align}
	\label{eq:opteqs}
	\min_{\mathbf{Z}} &\ \  \sum_{j=1}^{2N+1} \norm{\dot q_{\rmbf}}_2^2    \\
	\mathrm{s.t.} 	  
&\ \   \textbf{C1}.\ \text{dynamics (CL-Dyn-f)}                  \hspace{10mm} &j = 1,3,...2N+1  \notag \\
&\ \   \textbf{C2}.\ \text{collocation constraints       }                     &j = 2,4,...2N \notag\\
&\ \   \textbf{C3}.\ \text{impact dynamics} \eqref{eq:hyimpact1}               &j = 2N+1 \notag\\
&\ \   \textbf{C4}.\ \text{periodic continuity}                                &j = 1, 2N+1 \notag\\
&\ \   \textbf{C5}.\ \text{physical feasibility}                               &j = 1,2,...2N+1  \notag
\end{align}}%
with the following notation:
$2N+1=11$ is the total number of collocation grids; 
the decision variable is defined as
\par\vspace{-4mm}{\small 
\begin{align*}
\mathbf{Z}=(\alpha, t^j, q_\rmf^j, \dot q_\rmf^j,  u_\rmf^j,  
\lambda_1^j, \lambda_2^j, \lambda_c^j, 
\Lambda_1^j, \Lambda_2^j, \Lambda_c^j);
\end{align*}}%
and $\alpha\in\R^{36}$ \blue{are the coefficients for the Be\'zier polynomial} that defines the desired trajectory $\mathcal{B}_\rmf(t)$; $\square^j$ is the corresponding quantities at time $t_j$ with $t^{2N+1}=T$.
In short,  the cost function is to minimize the body's vibration rate to achieve a more static torso movement. The constraints \textbf{C1}-\textbf{C3} solve the hybrid dynamics of bipedal robots subject to external forces. Details regarding the numerical optimization can be found in \cite{hereid20163d}. In particular, the Hermite-Simpson collocation formulation can be found in equations (C1,C2) in \cite{hereid20163d}. Here, the periodic continuity constraint \textbf{C4} enforces state continuity through an edge, i.e., the post-impact states $q^+, \dot q^+$, are equivalent to the initial states $q^1, \dot q^{1}$. Therefore, the resultant trajectory is a periodic solution to the bipedal dynamics. 
\textbf{C5} imposed some feasibility conditions on the dynamics, including torque limits $\norm{u_i}_{\infty} \leq 50$, joint feasible space $(q_i, \dot q_i)\in\mathcal{X}$, foot clearance and the friction pyramid conditions. Note that we posed these constraints conservatively to reduce the difficulties implementing the optimized trajectories in experiments.

To solve the optimization problem \eqref{eq:opteqs} efficiently, we used a toolbox FROST \cite{Hereid2018Rapid,hereid2018dynamic}, which parses a hybrid control problem as a NLP based on direct collocation methods, in particular, Hermite-Simpson collocation. It is worthwhile to mention that a critical reason for the high efficiency of FROST comes from the implicit formulation of the dynamics. Matrix inversion is avoided in every step due to its computational complexity: $\mathcal{O}(n^3)$, with $n$ the dimension of a matrix. Inspired by this, we remark the dynamics decomposition method proposed in this paper also only used differential algebra equations (DAEs) instead of ordinary differential equations (ODEs), which requires matrix inversion both for the inertia matrix and the closed-loop controller formulation. 

Once the optimization \eqref{eq:opteqs} converged to a set of parameters $\alpha$ for the front bipedal robots' walking gait $\mathcal{B}_\rmf(t)$, we can use \eqref{eq:bi2quadmap} to obtain the trajectory for the rear biped $\mathcal{B}_\rmr(t)$ and then recompose them to get the parameters for the quadrupedal locomotion. For validation, we simulated an ambling step of the quadrupedal dynamics using the composed bipedal gaits. As shown in \figref{fig:bi2quad}, we have the joint angles and constraint wrench (ground reaction force) on toe1 $\lambda_{1,z}$, and toe2 $\lambda_{2,z}$ of the quadruped matched with those corresponding external force to the bipedal dynamics.

\begin{table}[b]
\vspace{-4mm}
\caption{{\small Computing performance of gait generation. This is performed on a Linux machine with an i7-6820HQ CPU @$2.70$ GHz and $16$ GB RAM.}} 
\centering 
    \vspace{-1.2mm}
        \begin{tabular}{|c|c|c|c|c|c|} \hline
             \textbf{Behaviors}  & gait1    & gait2 & gait3 & gait4 & amble \\ \hline
        frequency (Hz)           & 2.5      & 2.3   & 2.2   & 2.6   & 2.83  \\ \hline
        clearance (cm)           & 11       & 12    & 15    & 13    & 13    \\ \hline
        \# of iterations         & 96       & 122   & 98    & 46    & 147   \\ \hline
        time of IPOPT (s)        & 1.60     & 2.10  & 1.62  & 0.81  & 2.59  \\ \hline
        time of evaluation (s)   & 1.94     & 3.24  & 2.10  & 0.94  & 2.86  \\ \hline
        NLP time(s)              & 3.54     & 5.34  & 3.72  & 1.75  & 5.45  \\ \hline
    \end{tabular}    
    \label{t:table1}
\end{table}


We now take advantage of the efficient, decomposition-based optimization to generate several walking patterns for the front biped, then recompose them to obtain quadrupedal stepping-in-place behaviors. By adjusting the constraint bounds in the NLP \eqref{eq:opteqs}, such as the upper and lower bound $T_{\max}, T_{\min}$ of time duration $T$, or the the bounds of the nonstance foot height $\delta_{\min} \leq h_{\text{nsf},z}(q_\rmf) \leq \delta_{\max}$, we can obtain gaits with different stepping frequency and foot clearance. Further, we remove the constraint that the nonstance foot lands at the origin to generate a \textit{diagonally ambling} gait with a speed of $0.35$ m/s. See \figref{fig:tiles} for the tiles of these gaits.
%
The result of the methods presented is the ability to generate quadrupedal gaits rapidly. We benchmark the performance by considering computing speed for each of the quadrupedal locomotion patterns generated, as is shown in Table \ref{t:table1}. In summary, with the objective tolerance and equality constraint tolerance configured as $10^{-8}$ and $10^{-5}$ respectively, we have the average computation time as $3.96$ second, and time per iteration averages $0.039$ second. 
In comparison with the regular full-model based optimization methods from \cite{ma2019First}, the decomposition-based optimization is an order of magnitude faster. 


\begin{figure}[!t]
    \centering
    \vspace{1mm}
	\includegraphics[width=0.494\textwidth]{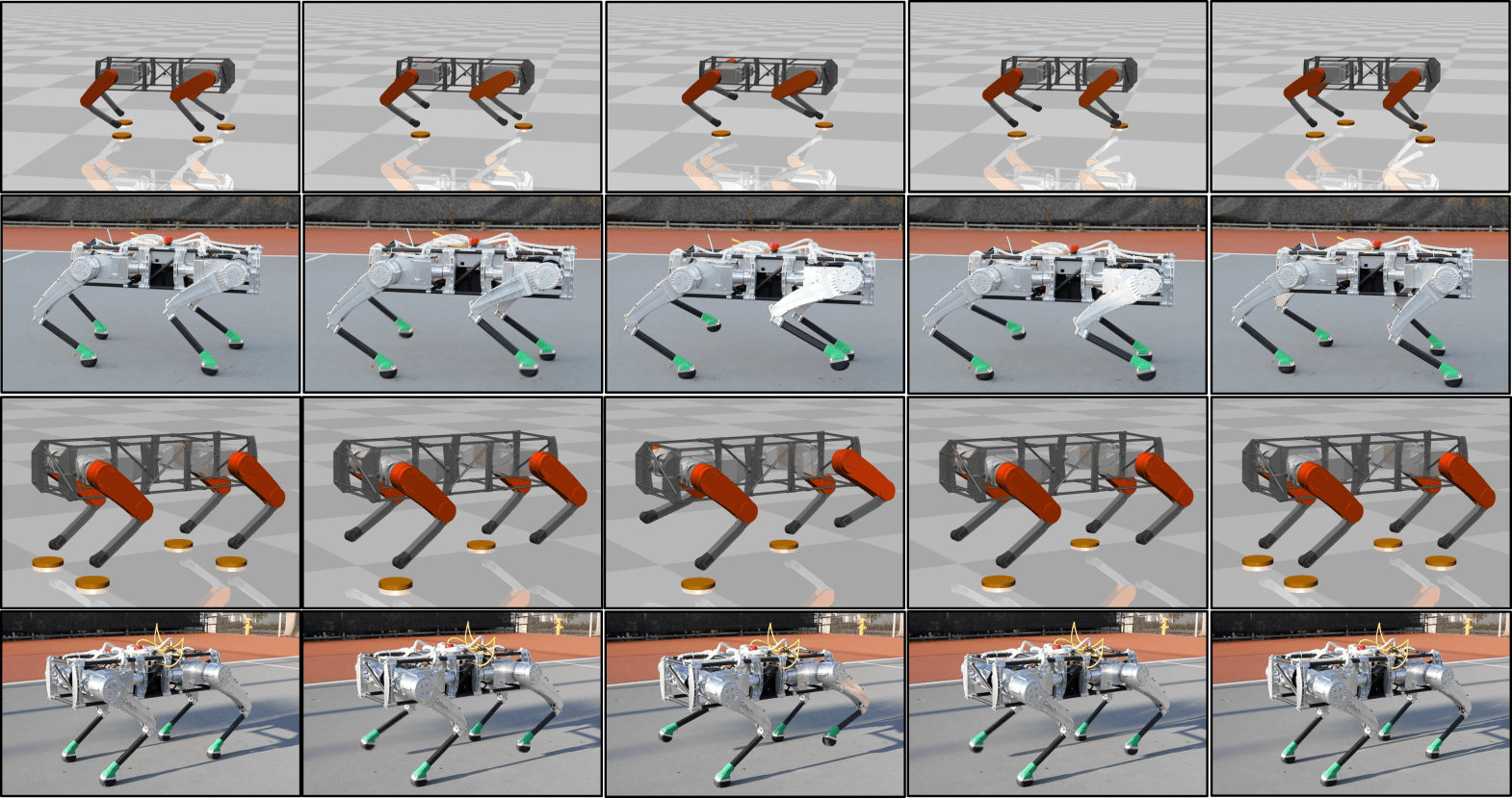} 
    \caption{{\small
    Comparison between MuJoCo simulation (animated) and experiments. The upper two are for stepping in place, gait4; the lower two are logged for a full step of the ambling gait.}}
    \label{fig:tiles}
    \vspace{-2.6mm}
\end{figure}

\begin{figure}[t]
	\centering
	\includegraphics[width=0.115\textwidth]{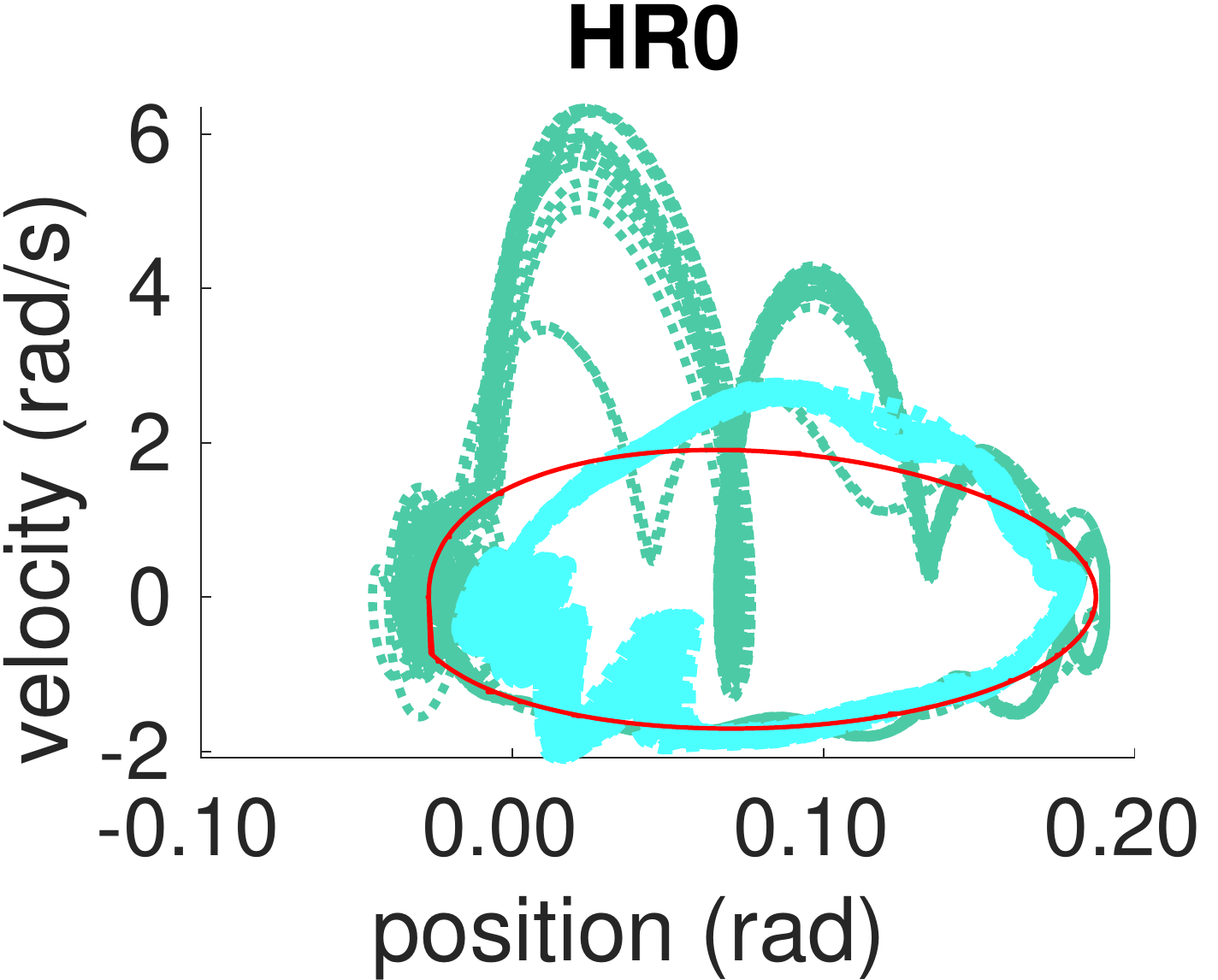}
	\includegraphics[width=0.115\textwidth]{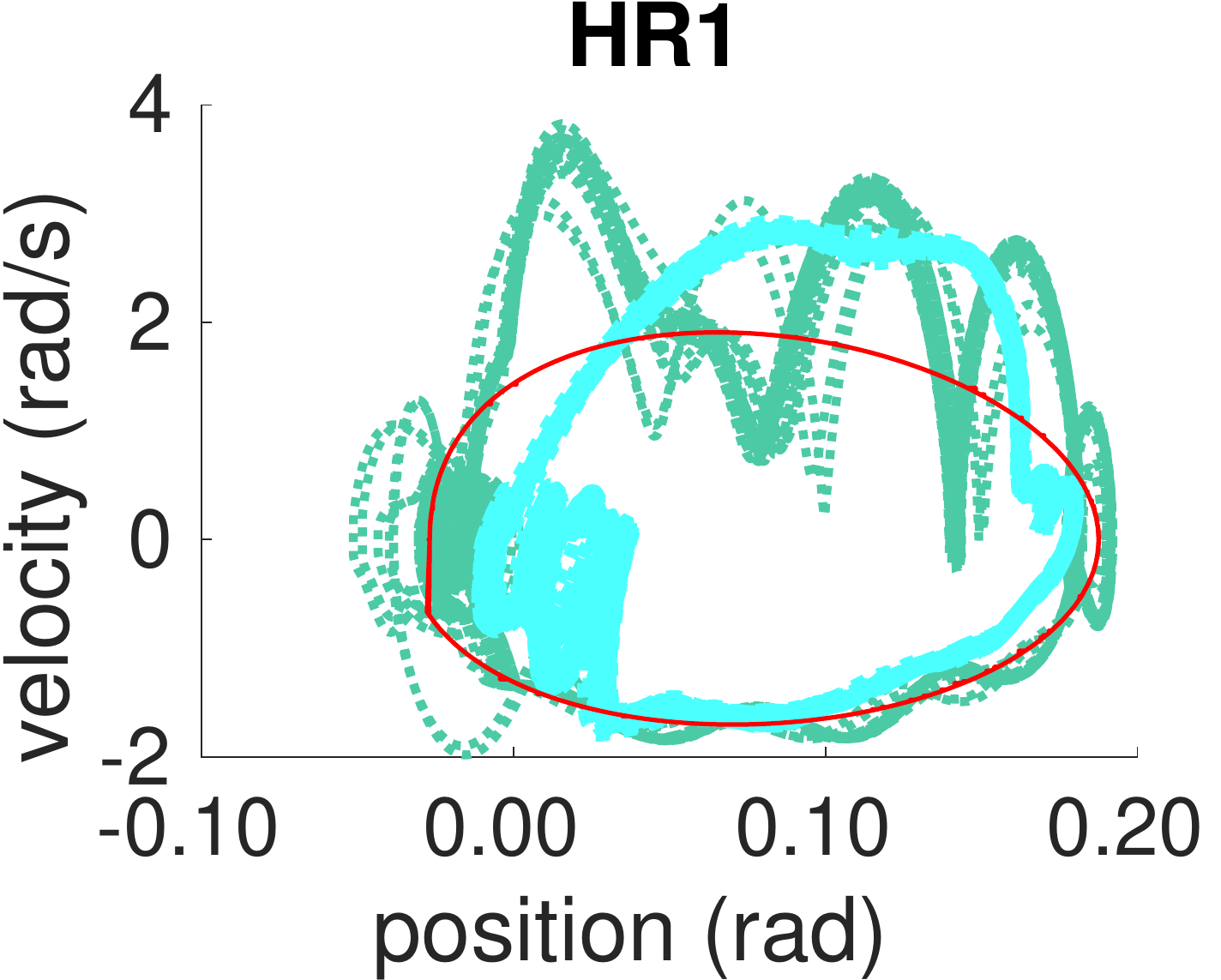}
	\includegraphics[width=0.115\textwidth]{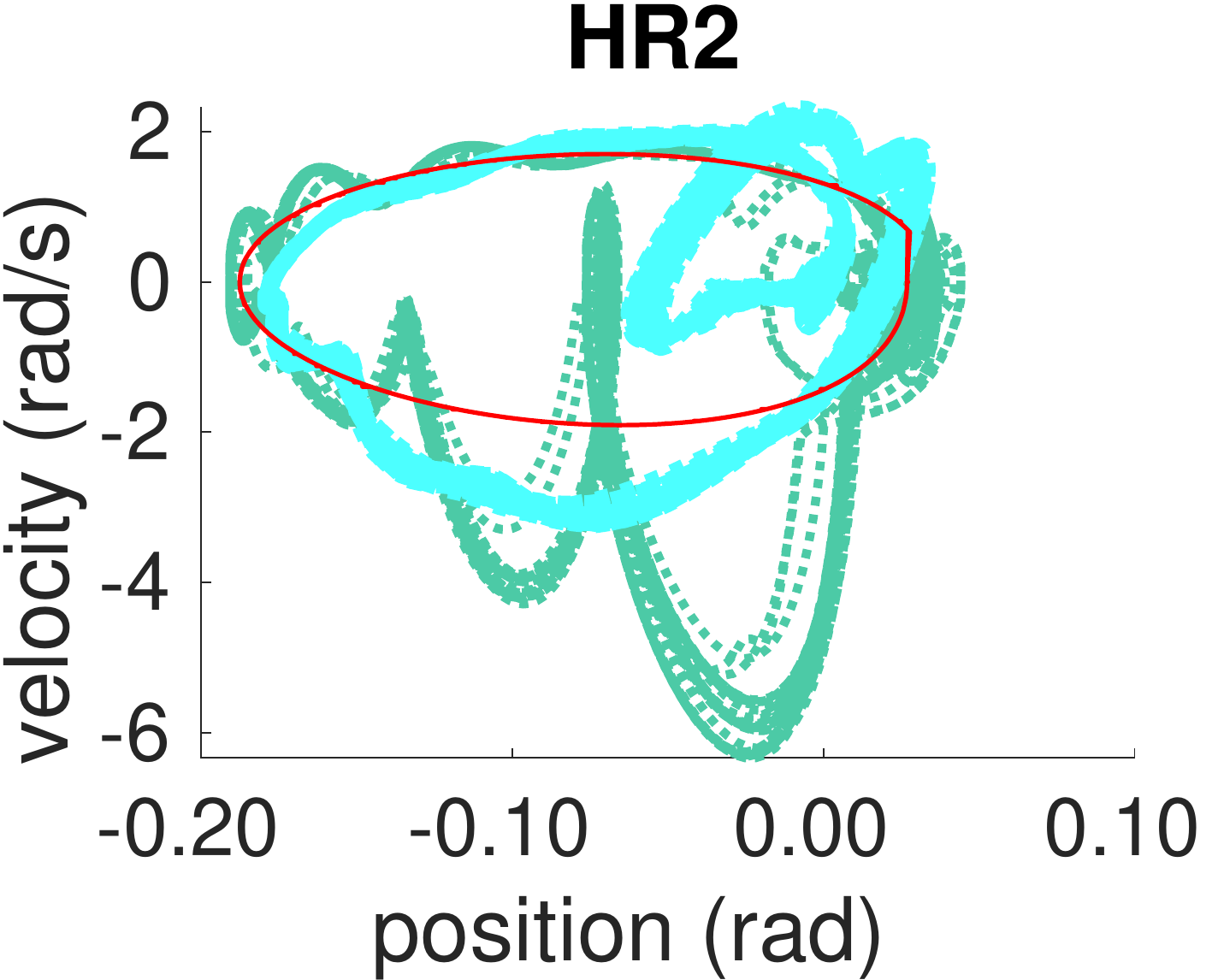}
	\includegraphics[width=0.115\textwidth]{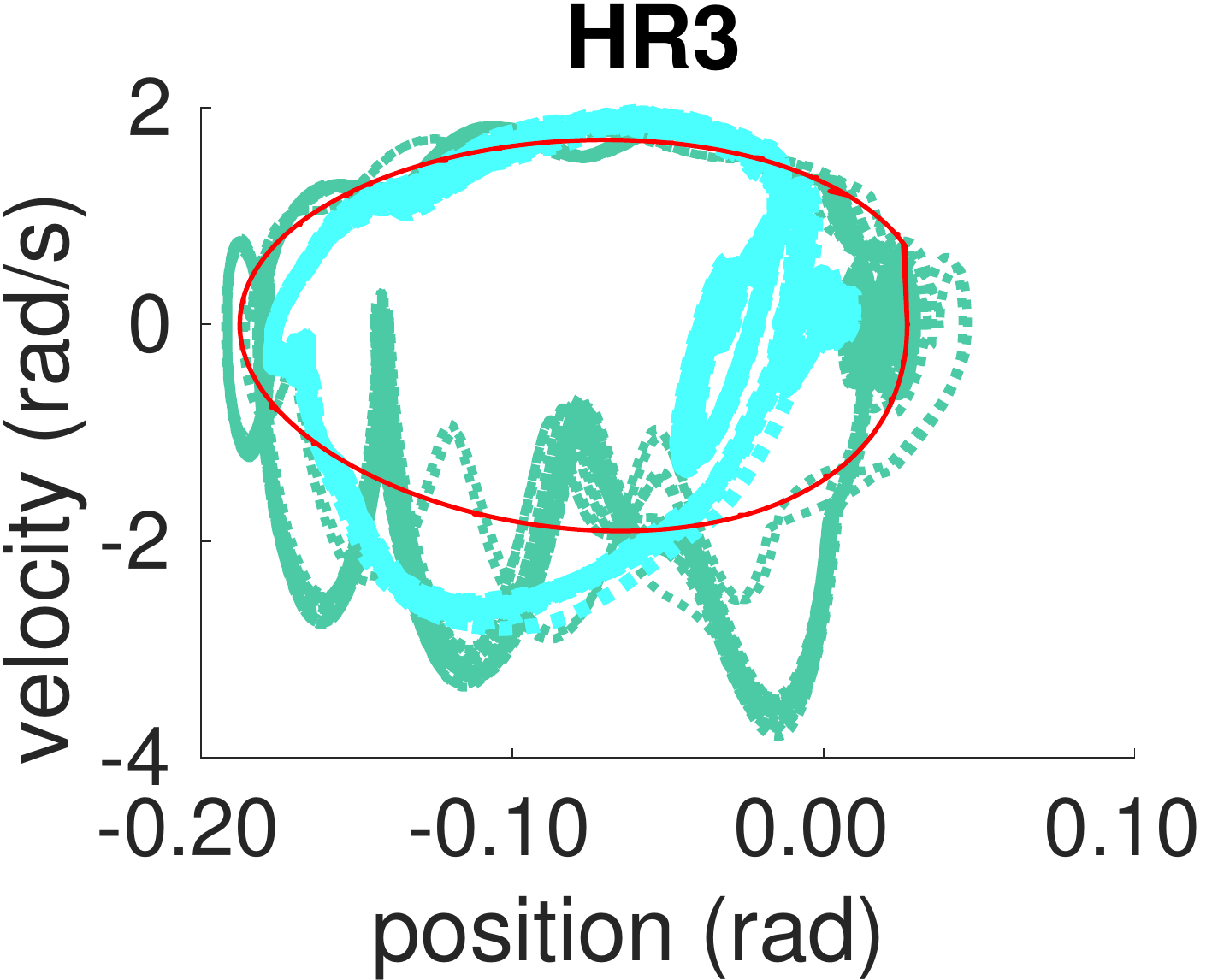}
	\\
	\includegraphics[width=0.115\textwidth]{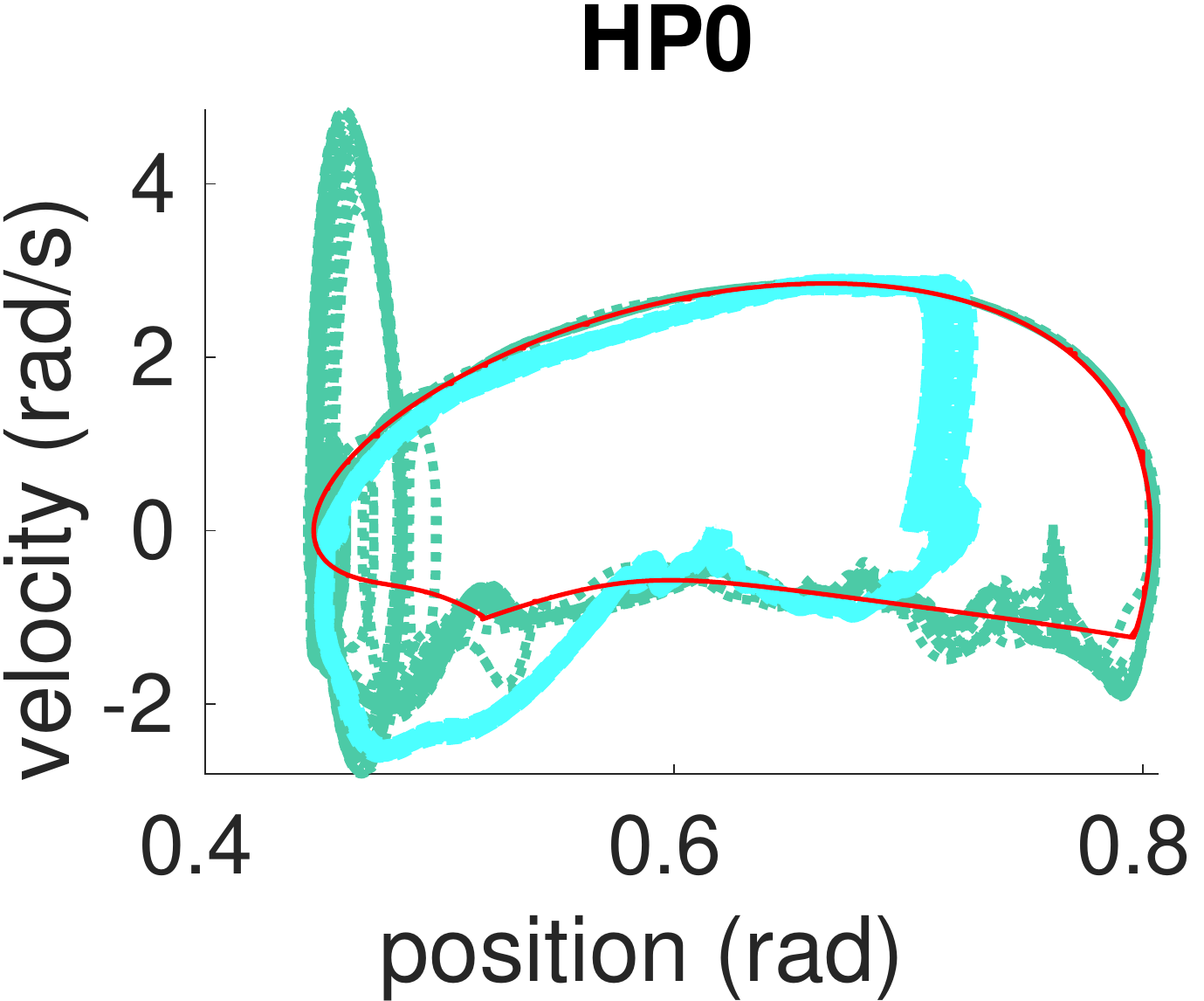}
	\includegraphics[width=0.115\textwidth]{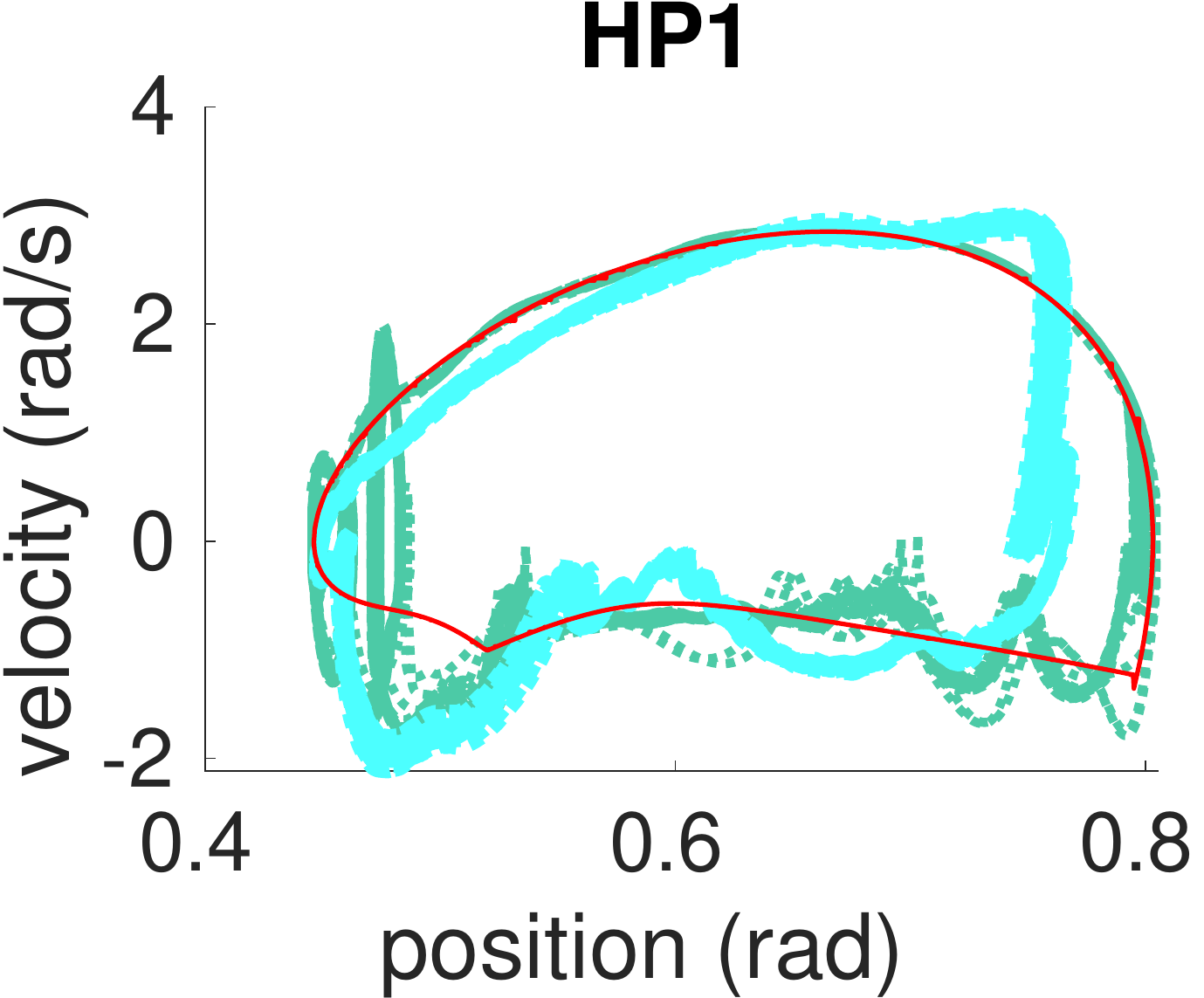}
	\includegraphics[width=0.115\textwidth]{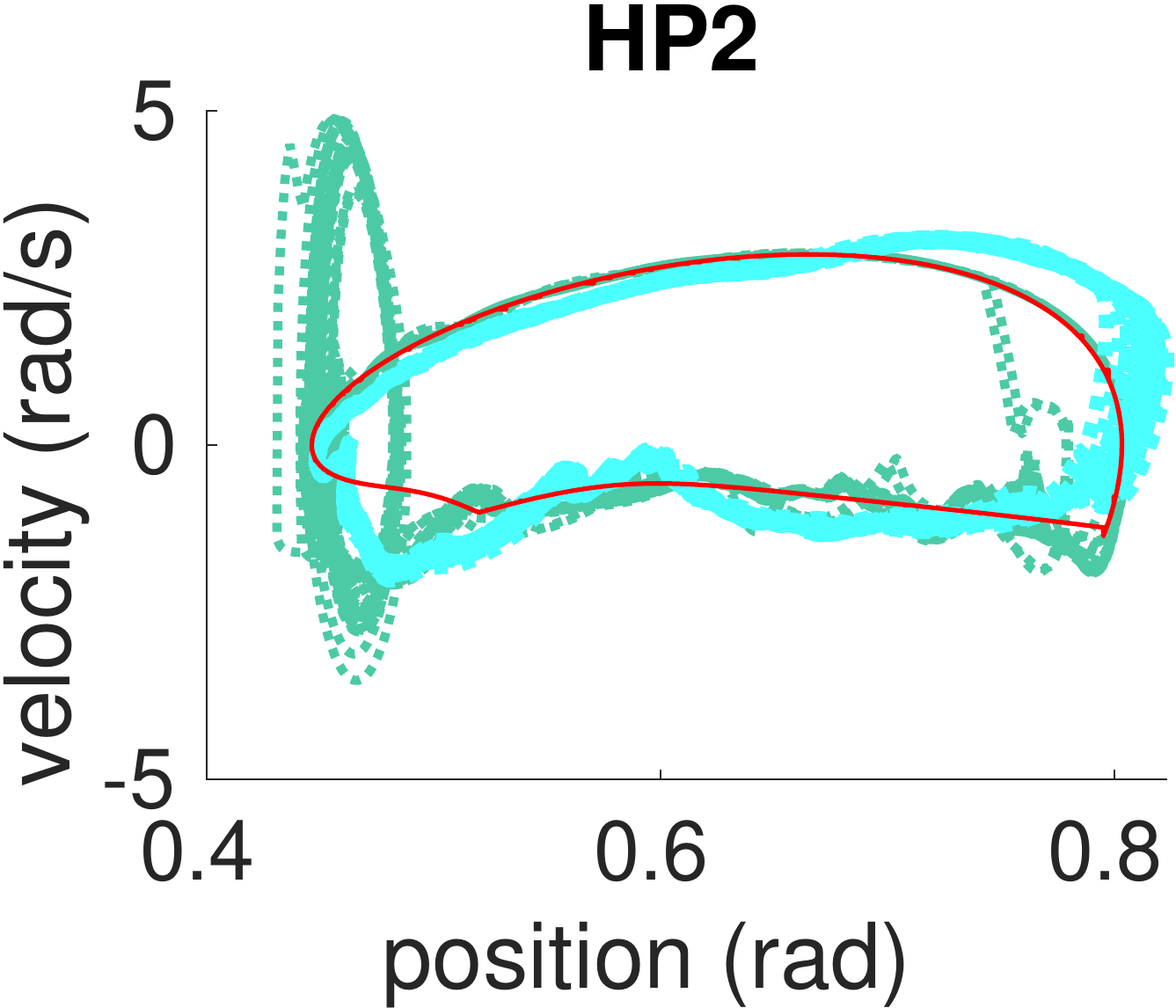}
	\includegraphics[width=0.115\textwidth]{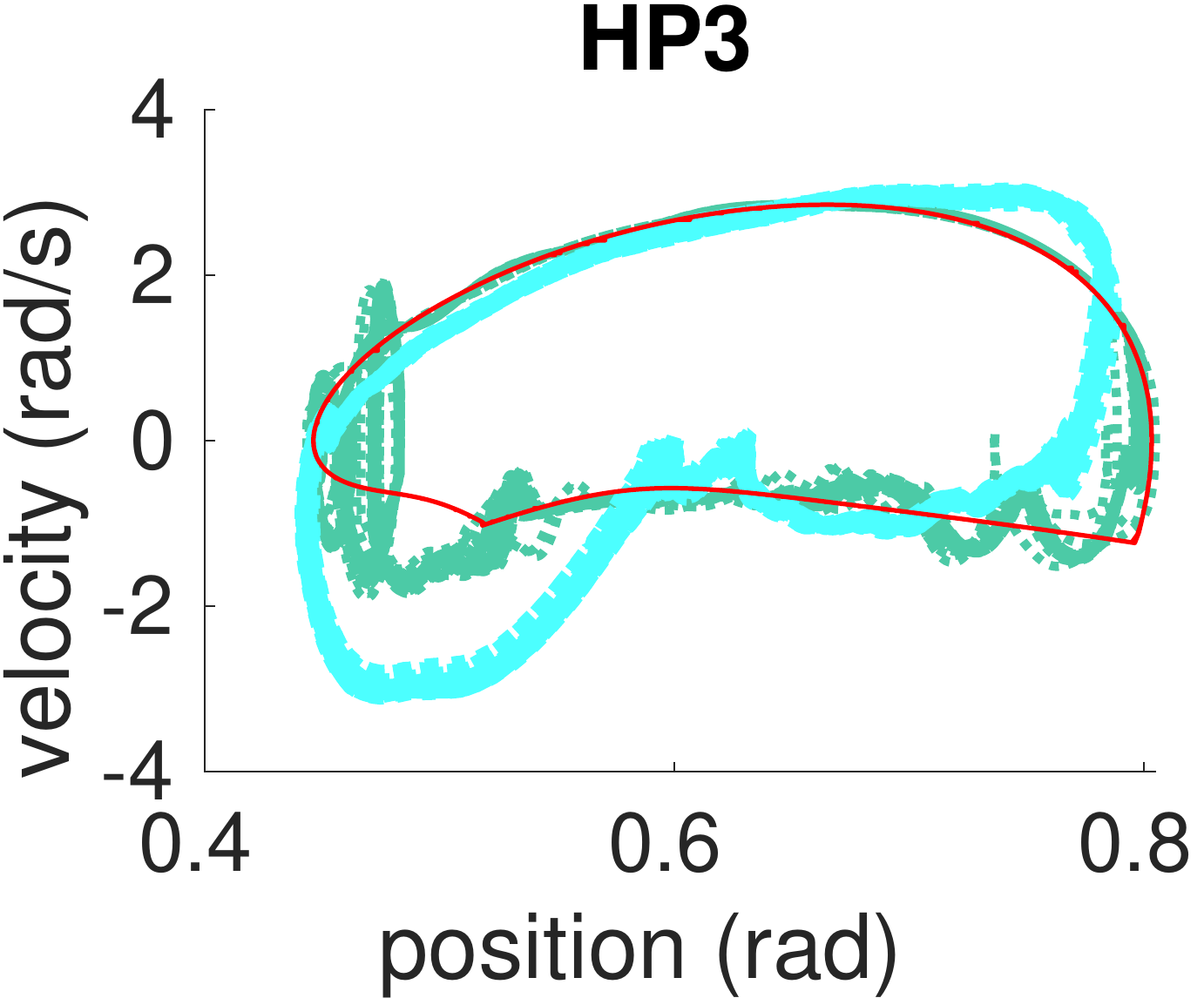}
	\\
	\includegraphics[width=0.115\textwidth]{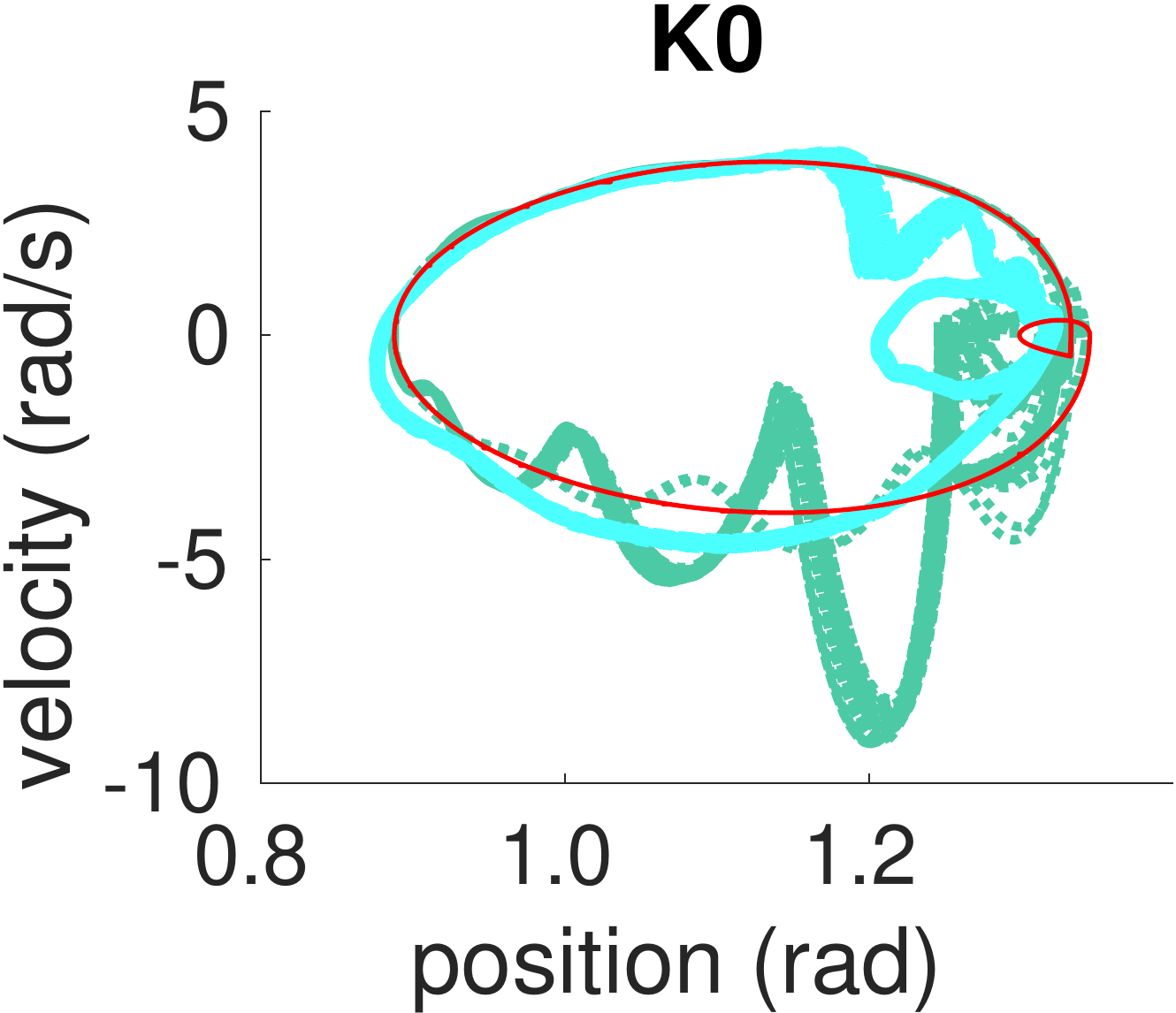}
	\includegraphics[width=0.115\textwidth]{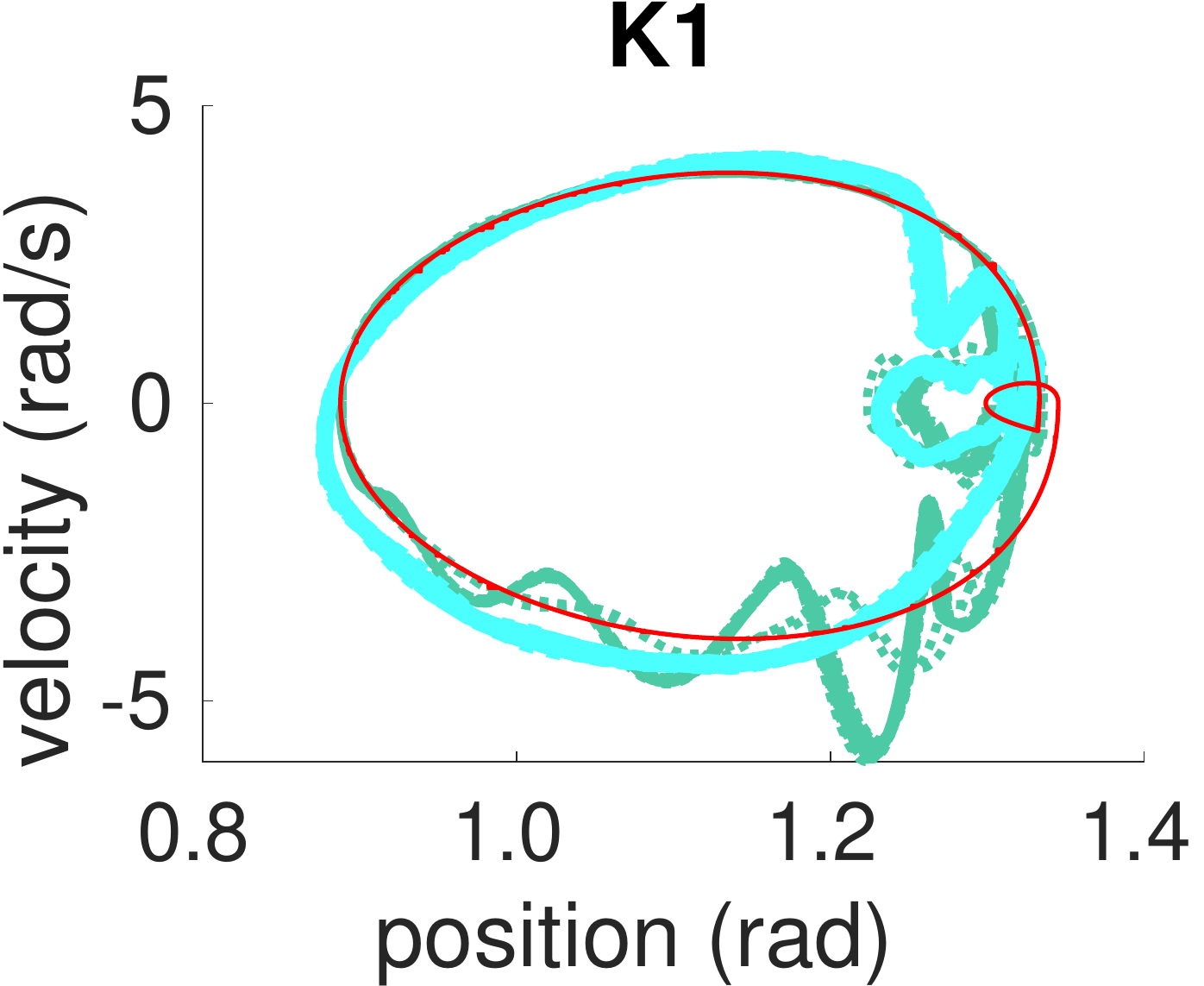}
	\includegraphics[width=0.115\textwidth]{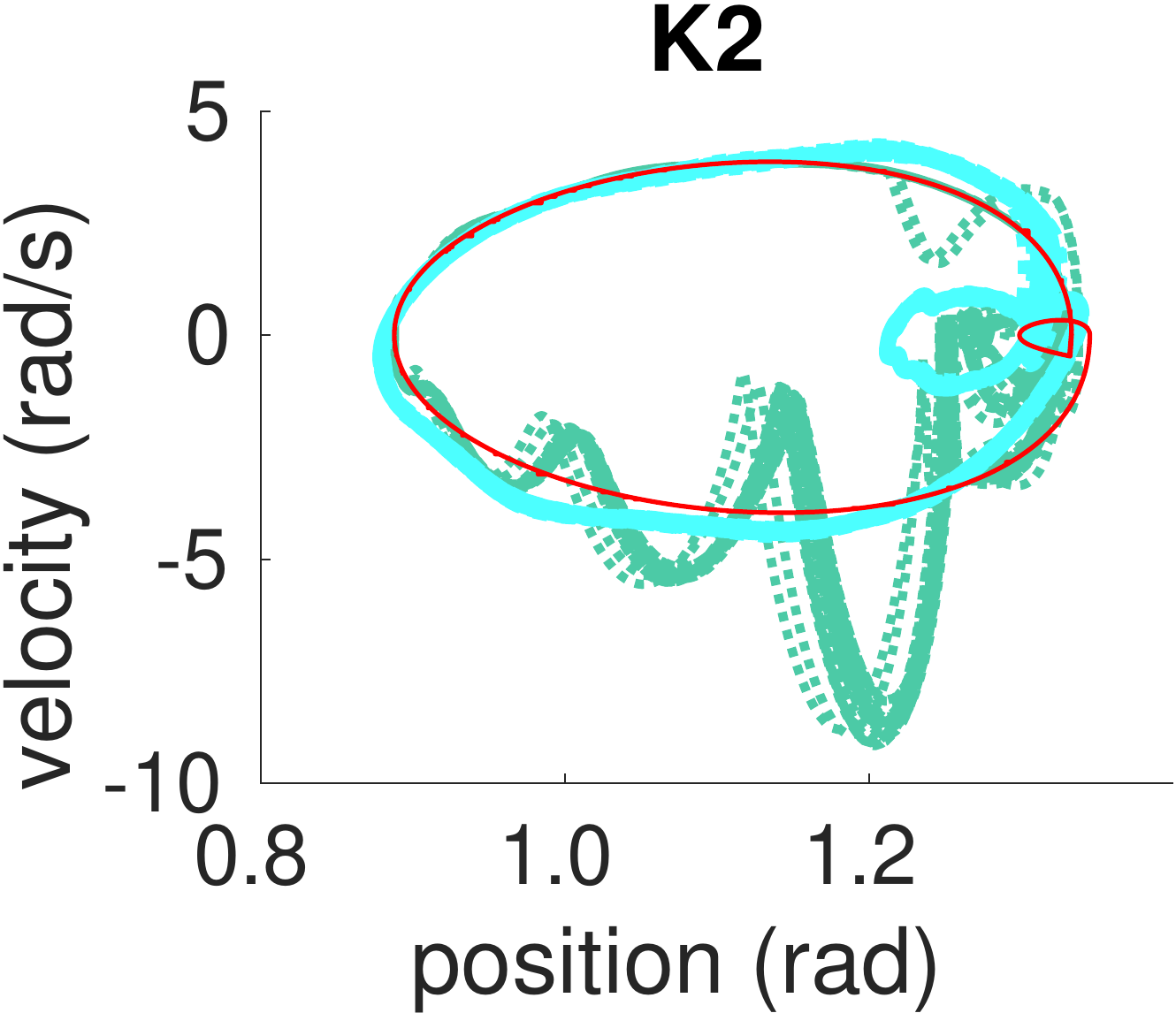}
	\includegraphics[width=0.115\textwidth]{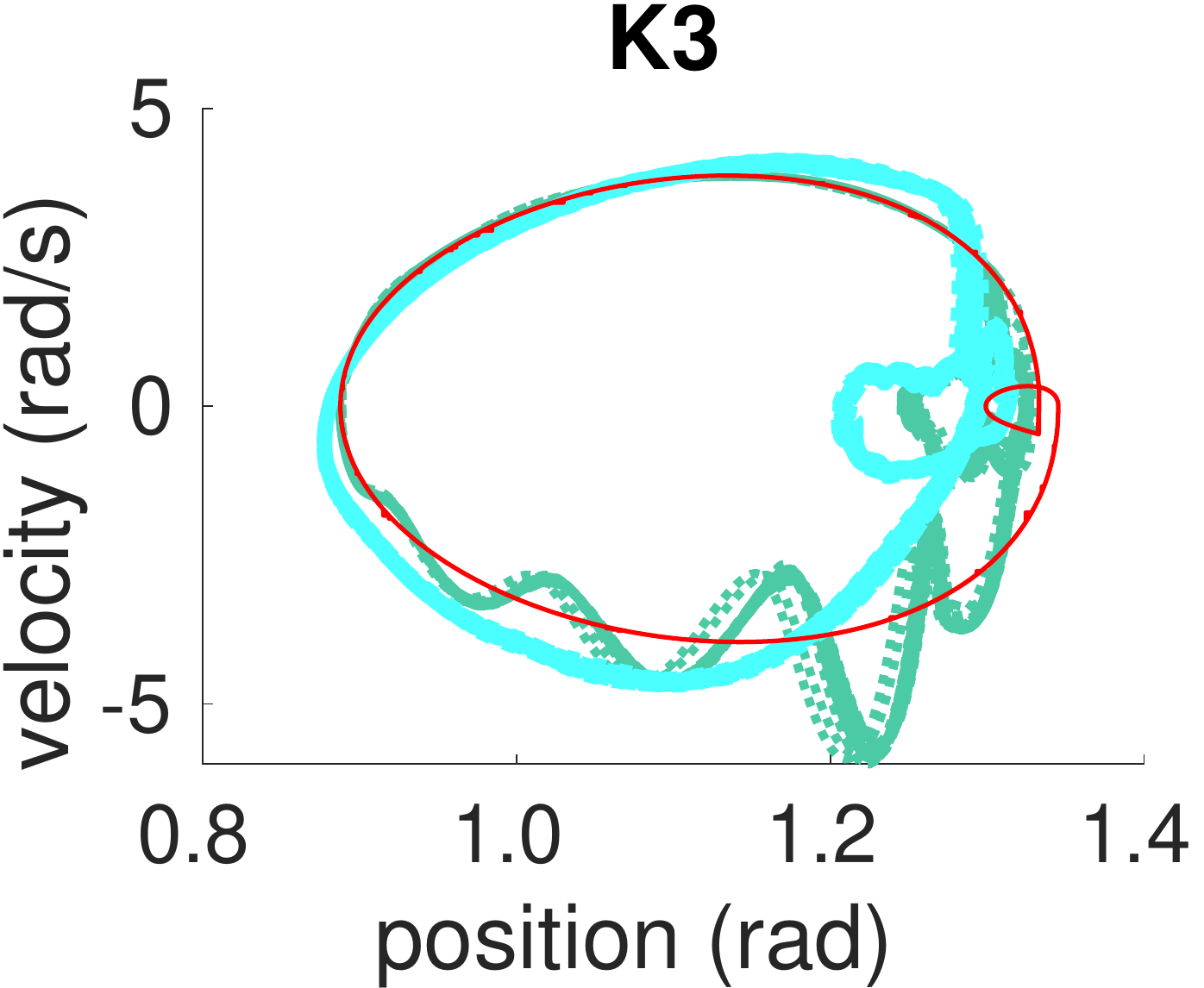}
	\vspace{-1mm} 
	\caption{{\small Tracking performance of the optimal ambling gait (in red) vs. the MuJoCo simulated result (in green) vs. the experimental data (in cyan) in the form of phase portrait using 18 seconds' data. HR: hip roll joint; HP: hip pitch joint; K: knee joint.}}
	\label{fig:walk2expsim}
	\vspace{-5mm}
\end{figure}

\section{Simulation and experiments}


One of the motivations for realizing rapid gait generation using the full-body dynamics of the quadruped, i.e., without model simplifications, is to allow for the seamless translation of gaits from theoretical simulation to hardware. In this context, we first validated the dynamic stability of the gaits produced by the decomposition-based optimization problem using a third party physics engine --- MuJoCo. These gaits include a diagonally symmetric ambling and four stepping in place behaviors. Then we conducted experiments, walking on a  a outdoor tennis court, using the same control law as that in simulation in outdoor environments. In particular, we used a PD approximation of the input-output linearizing controllers to track the time-based trajectories given by the optimization, 
\par\vspace{-3mm}{\small 
\begin{align}
    \label{eq:PD}
    u(q_a, \dot q_a, t) = 
    -k_1\big(\dot y_a - \mathcal{\dot{B}}(t) \big) -k_2\big(y_a - \mathcal{B}(t) \big). 
\end{align}}%
Note that the event functions (switching detection) are also given by the optimized trajectories, meaning the walking controller switches to the next step when $t=T$. We report that for all given optimal gaits, the PD gains are picked as $k_p = 230,\ 230,\ 300, k_d = 5$ for the hip roll, hip pitch, knee joints, respectively. The averaged absolute joint torque inputs are logged in Table \ref{t:exp}, all of which are well within the hardware limitations. The tracking of the ambling gait in simulation and experiment are shown in \figref{fig:walk2expsim}. 

\begin{table}[t] 
\centering 
    \caption{{\small Average torque inputs in experiments and simulations.}}
    \vspace{-1.2mm} 
        \begin{tabular}{|l|l|l|l|l|l|} \hline
               \textbf{Experiments}         & gait1    & gait2 & gait3 & gait4 & amble \\ \hline 
        $\bar{u}_{\text{HR}}$(N$\cdot$m)    & 5.04,    & 4.83  & 4.16  & 5.14  & 7.11  \\ \hline 
        $\bar{u}_{\text{HP}}$(N$\cdot$m)    & 3.65     & 5.24  & 5.26  & 3.77  & 6.28  \\ \hline 
    $\bar{u}_{\text{K}}$(N$\cdot$m)         & 16.45    & 16.50 & 16.86 & 16.95 & 18.36 \\ \hline 
               \textbf{MuJoCo}              &          &       &       &       &       \\ \hline 
        $\bar{u}_{\text{HR}}$(N$\cdot$m)    & 7.80     & 9.23  & 10.27 & 8.68  & 8.06  \\ \hline 
        $\bar{u}_{\text{HP}}$(N$\cdot$m)    & 6.78     & 9.14  & 10.71 & 6.64  & 7.27  \\ \hline 
        $\bar{u}_{\text{K}}$(N$\cdot$m)     & 18.49    & 18.38 & 18.45 & 18.61 & 19.03 \\ \hline 
    \end{tabular}
    \label{t:exp}
    \vspace{-4mm}
\end{table}


The result is that the Vision 60 quadruped can step and amble in an outdoor tennis court in a sustained fashion. Importantly, this is without any add-on heuristics 
and achieved by only uploading different gait parameters $\alpha$ for each experiment (obtained from the different NLP optimization problems with different constraints
). 
See \cite{amble_video} for the video of Vision 60 in both simulation and experiments. 
As demonstrated in the video, we remark that the proposed method has rendered a good level of robustness against rough terrain with slopes, wet dirt and surface roots. Hence periodic stability has been obtained in both simulation and experiment. \figref{fig:tiles} shows a side to side comparison of the simulated amble and experimental snapshots. 
In addition, it is interesting to note that time-based control law \eqref{eq:PD} normally does not provide excellent robustness against uncertain terrain dynamics, due to its open-loop nature. However, the fact that all of the trajectory-based controllers achieved dynamic stability in simulations and experiments with an unified control law speaks to the benefits of generating gaits using the full-body dynamics of the quadruped: even with an open-loop controller that does not leverage heuristics, the quadruped remains stable. 

\section{Conclusion} \label{sec:conclusion}

In this paper, we decomposed the full-body dynamics of a quadrupedal robot --- the Vision 60 with 18 DOF and 12 inputs --- into two lower-dimensional bipedal systems that are subject to external forces. We are then able to solve the constrained dynamics of these bipeds quickly through the HZD optimization method, FROST, wherein the gaits can be recomposed to achieve locomotion on the original quadruped. 
The result is the ability to generate walking gaits rapidly. Specifically, by changing a constraint, we can produce different bipedal and, thus, quadrupedal walking behaviors from stepping to ambling in $3.9$ seconds on average. Furthermore, the implementation in simulation and experiments used a single simple controller, without the need for additional heuristics.

Without sacrificing the model fidelity of the full-body dynamics of the quadruped, the ability to exactly decompose these dynamics into equivalent bipedal robots makes it possible to rapidly generate gaits that leverage the full-order dynamics of the quadruped. Importantly, this allows for the rapid iteration of different gaits necessary for bringing quadrupeds into real-world environments. Moreover, the fact that these gaits can be generated on the order of seconds suggests that with code optimization on-board and real-time gait generation may be possible soon.  The goal is to ultimately use this method to realize a variety of different dynamic locomotion behaviors on quadrupedal robots. 


\addtolength{\textheight}{-6.4cm}
\bibliographystyle{abbrv}
\bibliography{cite}

\end{document}